\DeclareTextCommand{\nobreakspace}{T1}{\leavevmode\nobreak\ } 
\theoremstyle{definition} \newtheorem{defn}{Definition}
\theoremstyle{plain} \newtheorem{prop}[defn]{Proposition}
\theoremstyle{plain} \newtheorem{thm}[defn]{Theorem}
\theoremstyle{plain} 
\theoremstyle{plain} 
\theoremstyle{remark} \newtheorem{rmk}[defn]{Remark}
\theoremstyle{remark}
\newcommand{\term}[1]{\textcolor{BlueViolet}{\textit{{#1}}}}
\newcommand*{\defeq}{\mathrel{\vcenter{\baselineskip0.5ex \lineskiplimit0pt     
                     \hbox{\scriptsize.}\hbox{\scriptsize.}}}=}
\newcommand{\overbar}[1]{\mkern 1.5mu\overline{\mkern-1.5mu#1\mkern-1.5mu}\mkern 1.5mu}
\DeclareMathOperator*{\argmin}{arg\,min}
\newcommand{\Abs}[1]{\lVert{#1}\rVert} 
\newcommand{\abs}[1]{\lvert{#1}\rvert} 
\DeclareMathOperator{\crit}{C} 
\def\colonset{:\,} 
\def\cond{\,\vert\,} 
\def\ddist{\mu} 
\newcommand{\err}[1]{\mathcal{E}({#1})} 
\def\errMin{\mathcal{E}^{\ast}} 
\def\exx{\mathbf{E}} 
\def\Herr{\mathcal{H}_{\mathcal{E}}^{\ast}} 
\def\HH{\mathcal{H}} 
\DeclareMathOperator{\indic}{I} 
\def\LL{\mathcal{L}} 
\DeclareMathOperator{\loss}{\rdv{L}} 
\newcommand{\prr}[1]{\mathbf{P}{#1}} 
\newcommand{\rdv}[1]{\mathsf{#1}} 
\def\RR{\mathbb{R}} 
\DeclareMathOperator{\sign}{sign} 
\def\XX{\mathcal{X}} 
\def\YY{\mathcal{Y}} 
\begin{document}

\title{\textbf{Criterion Collapse and Loss Distribution Control}}

\author{
  Matthew J.~Holland\\
  Osaka University
}
\date{} 

\maketitle

\begin{abstract}
In this work, we consider the notion of ``criterion collapse,'' in which optimization of one metric implies optimality in another, with a particular focus on conditions for collapse into error probability minimizers under a wide variety of learning criteria, ranging from DRO and OCE risks (CVaR, tilted ERM) to non-monotonic criteria underlying recent ascent-descent algorithms explored in the literature (Flooding, SoftAD). We show how collapse in the context of losses with a Bernoulli distribution goes far beyond existing results for CVaR and DRO, then expand our scope to include surrogate losses, showing conditions where monotonic criteria such as tilted ERM cannot avoid collapse, whereas non-monotonic alternatives can.
\end{abstract}

\tableofcontents

\clearpage

\section{Introduction}\label{sec:intro}

As machine learning systems become more widespread and integrated in our daily lives, the tension between performance metrics we ideally wish to optimize (at test time) and those we actually optimize in practice (at training time) becomes increasingly important. Broadly speaking, tasks are characterized by \emph{losses} and what we will call \emph{learning criteria}. Losses are diverse, depending on the underlying problem to be solved (e.g., classification, regression, ranking, clustering, etc.) and a wide range of application-specific needs. Losses depend on random data, and are themselves random. In principle, performance can be described in terms of random loss distributions, but working with complex and unwieldy distributions is not practical; a concise numerical summary is usually desired both for designing objectives to optimize, as well as for disseminating interpretable results. Providing such a summary is the role of learning criteria. In contrast with the diversity of losses, there is a single \textit{de facto} standard learning criterion, namely the expected value of the loss distribution, a criterion which is central to the ``general setting of the learning problem'' \citep{vapnik1999NSLT} that is pervasive in both theory and practice.

In recent years, however, the need to capture a wider variety of performance metrics at training time has motivated the use of novel learning criteria going beyond the expected value. Arguably the best-known families of such criteria are those which control sensitivity to the right tail of the loss distribution, e.g., CVaR \citep{curi2020a}, positive-tilted ERM \citep{li2021a}, and Cressie-Read DRO \citep{duchi2021a}, though criteria designed to capture both tails at once have also begun to appear \citep{holland2022c}. While all of these criteria can be clearly shown to be distinct from the expected value, when such criteria are optimized at training time, can we expect the outcome to actually change? This question is somewhat delicate, since it depends on the nature of the loss distribution. For example, when we are working with the zero-one loss for classification, or any binary performance indicator for that matter, previous work has noted that vanilla empirical risk minimization (ERM) is always sufficient for minimizing DRO and CVaR \citep{hu2018a,zhai2021b}, making it meaningless to distinguish between optimizing such criteria.

What about when we make use of surrogate loss functions for training? The traditional theory of transferring consistency from a surrogate to a target loss is tied tightly to the expected value criterion (and its linearity) \citep{bartlett2006b,reid2011a}, meaning that a change in criterion can drastically change the relations between loss distributions. For example, given a model which is CVaR-optimal in terms of a surrogate loss, it may or may not be CVaR-optimal in terms of the zero-one loss; understanding when such properties hold is critical to ensure the learning algorithms we use in practice are aligned with our true objectives. For many classes of learning criteria, however, such questions remain completely unexplored.

In this work, we consider the notion of ``criterion collapse,'' in which optimization of one metric implies optimality in another, with a particular focus on conditions for collapse into error probability minimizers (i.e., minimizers of the expected zero-one loss) under a wide variety of learning criteria, including learning scenarios that involve a surrogate loss. We start in \S{\ref{sec:collapse}} by showing how collapse in the context of losses with a Bernoulli distribution goes far beyond CVaR and DRO. We then expand our scope to include surrogate losses in \S{\ref{sec:surrogates}}. First, we highlight cases where optimality diverges across loss distributions under a common criterion (\S{\ref{sec:surrogates_nolink}}), and then on the other hand show conditions where for \emph{monotonic} criteria (e.g., positive-tilted ERM), collapse into error probability minimizers is unavoidable (\S{\ref{sec:surrogates_unavoidable}}), though the use of non-monotonic alternatives can be used to avoid this when such collapse is not desired (\S{\ref{sec:surrogates_flood_softad}}). The main limitation of our results in \S{\ref{sec:surrogates}} is that we only cover margin-type losses for binary classification. Extensions to asymmetric, multi-class surrogate losses is left as future work. We complement our basic theory with a set of experiments in \S{\ref{sec:empirical}}, training non-linear neural network models (e.g., ResNet-34) for image classification from scratch, comparing across a variety of learning criteria with a common base loss. We find that when criterion selection is optimized for validation accuracy, the non-monotonic criteria (with clear links to ascent-descent learning algorithms) provide an appealing balance across other metrics such as model norm and average (surrogate) loss at test time.

\section{Criterion Collapse}\label{sec:collapse}

Underlying this section is the following simple question: \emph{is it meaningful to introduce a new criterion?} Even if two criteria are distinct in a mathematical sense, i.e., they return different values for the same loss distribution, if the solution set of one criterion completely includes the other, then an obvious \emph{sufficiency} property holds; no tradeoffs between criteria arise. When we have a sufficient condition (for optimality in both criteria) that is easy to satisfy, it essentially renders one criterion meaningless. Here we will look at how such a phenomenon occurs frequently under binary distributions.

\subsection{Preliminaries}

\paragraph{Basic notation}
Let $\XX \times \YY$ denote our underlying data space. Random data points will be denoted by individual random pairs distributed on $\XX \times \YY$. For a single data point, often to represent ``data at test time,'' we write $(\rdv{X},\rdv{Y})$. When we are dealing with non-random quantities, $\rdv{X}$ and $\rdv{Y}$ will be replaced by $x$ and $y$ respectively. As general-purpose notation for decisions, we write $h$, assumed to be an element of a set of admissible decisions $\HH$ (the ``hypothesis class''). Throughout this paper, when we use $\exx$ and $\mathbf{P}$ to represent expectation and probability, it will always be with respect to the distribution of $(\rdv{X},\rdv{Y})$, unless otherwise noted. For an arbitrary real-valued function $f: \HH \to \RR$, we denote the set of minimizers by $\argmin_{h \in \HH} f(h) \subset \HH$, with $\argmin_{h \in \HH} f(h) = \emptyset$ in the case that no minimizers exist in $\HH$. We remark that unlike $h$ (always $h:\XX \to \YY$), the notation $f$ is not reserved here. We use $f$ and also $g$ to denote various different helper functions throughout the paper.

\paragraph{Loss function and criterion mapping}
Central to this paper is the notion of a numerical \emph{loss} that can be used to provide feedback to a learning algorithm, or as a metric for evaluation after learning is complete. A \term{loss function} $\ell: \HH \times \XX \times \YY \to \RR$ maps (decision, data point) pairs to real values $\ell(h;x,y)$, called losses. When the data is random, so is the loss. We denote random losses by $\loss(h) \defeq \ell(h;\rdv{X},\rdv{Y})$, making the dependence on decision $h \in \HH$ explicit in our notation. Running over the set $\HH$, we end up with a set of random losses, denoted by $\LL \defeq \{\loss(h) \colonset h \in \HH \}$. We will sometimes just refer to individual random losses $\loss \in \LL$ without specifying which $h \in \HH$ is associated with $\loss$. The other basic notion we require is that of a \term{criterion mapping}, denoted $\crit: \LL \to \RR$, which maps random losses $\loss \in \LL$ to numerical values $\crit(\loss)$, called criteria. As mentioned in \S{\ref{sec:intro}, the traditional criterion is $\crit(\loss) = \exx[\loss]$, often called the ``risk,'' but we will go well beyond this criterion in the following sub-sections.

\subsection{Random error and criterion collapse}\label{sec:collapse_existing}

The most basic and pervasive loss function used for evaluation is the ``zero-one loss,'' a simple classification error penalty that we denote as
\begin{align}\label{eqn:defn_zeroone}
\ell_{\textup{01}}(h;x,y) \defeq
\begin{cases}
1, & \text{ if } h(x) \neq y\\
0, & \text{ otherwise}.
\end{cases}
\end{align}
Denoting the random error by $\loss_{\textup{01}}(h) \defeq \ell_{\textup{01}}(h;\rdv{X},\rdv{Y})$, it follows a Bernoulli distribution that is completely determined by its expected value, the error probability $\exx[\loss_{\textup{01}}(h)] = \prr\{ h(\rdv{X}) \neq \rdv{Y} \}$. Since this quantity will appear frequently, we reserve the following symbols for the error probability and its solution set:
\begin{align}\label{eqn:defn_err_Herr}
\err{h} \defeq \prr\{ h(\rdv{X}) \neq \rdv{Y} \}, \qquad \Herr \defeq \argmin_{h \in \HH} \err{h}.
\end{align}
The expected value $\loss \mapsto \exx[\loss]$ also happens to be the traditional choice of criterion mapping in machine learning, but as discussed earlier in \S{\ref{sec:intro}}, a variety of new choices for criterion mapping have arisen in the context of ``risk-sensitive'' learning.

In general, by introducing new criteria (instead of the expected value), one gains an additional degree of freedom (beyond choice of loss function) in terms of what qualities we evaluate, and how we quantify such qualities. However, for loss functions such as $\ell_{\textup{01}}$ in (\ref{eqn:defn_zeroone}), the resulting random loss $\loss_{\textup{01}}$ has such a simple distribution that many criteria ``collapse'' into the expected value. Two important special cases of this phenomenon have been noted in the previous literature. We recall these two cases here in chronological order, adapted to our notation for readability.
\begin{thm}[DRO criterion; \citeauthor{hu2018a}~(\citeyear{hu2018a}, Thm.~1)]\label{thm:collapse_DRO}
For arbitrary random loss $\loss \in \LL$, denote the distributionally robust optimization (DRO) criterion by
\begin{align*}
\textup{DRO}(\loss) \defeq \sup_{\ddist \in \mathcal{P}} \exx_{\ddist}[\loss]
\end{align*}
where the ``uncertainty set'' $\mathcal{P}$ is taken to be a ball centered at some pre-defined data distribution on $\XX \times \YY$, with finite radius measured by a valid $f$-divergence. Under zero-one loss $\loss = \loss_{\textup{01}}$, error probability minimizers are always optimal in terms of the DRO criterion, namely we have
\begin{align*}
\Herr \subset \argmin_{h \in \HH} \textup{DRO}(\loss_{\textup{01}}(h))
\end{align*}
with $\Herr$ defined earlier in (\ref{eqn:defn_err_Herr}).
\end{thm}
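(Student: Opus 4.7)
The plan exploits the fact that under zero-one loss, $\loss_{\textup{01}}(h)$ is Bernoulli with parameter $\err{h}$, so its law is determined by the single scalar $\err{h}$. I will show that $\textup{DRO}(\loss_{\textup{01}}(h))$ depends on $h$ only through $\err{h}$ and is a non-decreasing function of that quantity; the inclusion $\Herr \subset \argmin_{h \in \HH} \textup{DRO}(\loss_{\textup{01}}(h))$ then follows immediately.

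First I would reduce the DRO value to a two-point problem. Writing $A_h \defeq \{(x,y) \colonset h(x) \neq y\}$, we have $\exx_{\ddist}[\loss_{\textup{01}}(h)] = \ddist(A_h)$ and $\err{h} = \ddist_0(A_h)$, where $\ddist_0$ denotes the base distribution centering the uncertainty ball $\mathcal{P} = \{\ddist \colonset D_f(\ddist \| \ddist_0) \leq r\}$. The pushforward of any $\ddist \in \mathcal{P}$ under the indicator $\indic_{A_h}$ is $\textup{Ber}(\ddist(A_h))$, and by the data-processing inequality for $f$-divergences, $D_f(\textup{Ber}(\ddist(A_h)) \| \textup{Ber}(\err{h})) \leq D_f(\ddist \| \ddist_0) \leq r$. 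Conversely, any Bernoulli probability $q \in [0,1]$ obeying this bound is realized by the piecewise-constant reweighting with density $q/\err{h}$ on $A_h$ and $(1-q)/(1-\err{h})$ on $A_h^c$, under which the full $f$-divergence exactly matches the Bernoulli one. Assuming $\err{h} \in (0,1)$ (the boundary cases are trivial), this yields
\begin{align*}
\textup{DRO}(\loss_{\textup{01}}(h)) \;=\; \phi(\err{h}), \qquad \phi(p) \;\defeq\; \sup\{q \in [0,1] \colonset D_f(\textup{Ber}(q) \| \textup{Ber}(p)) \leq r\}.
\end{align*}

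Next I would verify that $\phi$ is non-decreasing on $[0,1]$. Since $\phi(p) \geq p$ (take $q = p$ for zero divergence) and the Bernoulli divergence $(q,p) \mapsto D_f(\textup{Ber}(q) \| \textup{Ber}(p))$ is jointly continuous and convex with unique zero along the diagonal, a level-set comparison for $p_1 \leq p_2$ shows that the feasible upper envelope can only shift rightward. Consequently, for any $h^* \in \Herr$ and any $h \in \HH$, $\err{h^*} \leq \err{h}$ gives $\textup{DRO}(\loss_{\textup{01}}(h^*)) = \phi(\err{h^*}) \leq \phi(\err{h}) = \textup{DRO}(\loss_{\textup{01}}(h))$, as claimed.

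The main obstacle is the monotonicity of $\phi$: this is the only step in which nontrivial properties of $f$-divergences are invoked. A clean route is to show that for each fixed $q$, the map $p \mapsto D_f(\textup{Ber}(q) \| \textup{Ber}(p))$ is decreasing on $[0, q]$, which follows from convexity of $f$ together with the identity $f(1) = 0$.
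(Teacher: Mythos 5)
Your proposal is correct, but it follows a genuinely different route from the one this paper relies on. The paper never reproves Theorem~\ref{thm:collapse_DRO} (it is quoted from \citet{hu2018a}); the collapse results it does prove (Proposition~\ref{prop:collapse_broad_ocelike}, extended to Cressie--Read DRO in Remark~\ref{rmk:dro_cressieread}) work on the \emph{dual/variational} side, evaluating $\theta + \exx[\rho(\loss_{\textup{01}}(h)-\theta)]$ at the competitor's optimal $\theta$ and using monotonicity of the dispersion $\rho$ to show the criterion is a non-decreasing function of $\err{h}$. You instead attack the primal sup directly: data processing plus the two-valued reweighting with density $q/\err{h}$ on the error event and $(1-q)/(1-\err{h})$ off it collapses the supremum over the whole $f$-divergence ball to a one-dimensional Bernoulli problem, giving the explicit representation $\textup{DRO}(\loss_{\textup{01}}(h)) = \phi(\err{h})$, and monotonicity of $\phi$ reduces to your key lemma that $p \mapsto D_f(\mathrm{Ber}(q)\,\|\,\mathrm{Ber}(p))$ is non-increasing on $[0,q]$; that lemma is sound, since the derivative equals $g(q/p)-g\bigl(\tfrac{1-q}{1-p}\bigr)$ with $g(t) \defeq f(t)-tf'(t)$ non-increasing by convexity (the role of $f(1)=0$ is only to guarantee $\phi(p)\geq p$). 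This is essentially a reconstruction of the original argument of \citet{hu2018a}: it needs no dual representation and yields slightly more (the explicit monotone transformation $\phi$ of the error probability), whereas the paper's monotone-$\rho$ argument applies uniformly to any criterion already written in variational form (OCE, tilted, Orlicz, Cressie--Read DRO) without ever solving the inner maximization. Two points to tighten: replace the vague ``level-set comparison'' sentence with the chain you implicitly use (for $p_1 \leq p_2$, either $\phi(p_1) \leq p_2 \leq \phi(p_2)$, or apply the lemma at $q=\phi(p_1) > p_2$), and note that the boundary case $\err{h}=0$ is ``trivial'' only under the reading---as in \citet{hu2018a}---that $\mathcal{P}$ contains only reweightings of the center $\ddist_0$; if the ball admits distributions not absolutely continuous with respect to $\ddist_0$ and $f$ has finite slope at infinity (e.g., total variation), the DRO value at zero error probability depends on whether the error event is empty or merely $\ddist_0$-null, and the stated inclusion can fail, so that convention should be made explicit.
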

\noindent Another very closely related insight has been presented in the literature, this time looking at conditional value-at-risk (CVaR), a well-studied criterion that is central to the quantification of financial risk.
\begin{thm}[CVaR criterion; \citeauthor{zhai2021b}~(\citeyear{zhai2021b}, Prop.~1)]\label{thm:collapse_CVaR}
Again taking any loss $\loss \in \LL$, we write the conditional value-at-risk (CVaR) criterion as
\begin{align*}
\textup{CVaR}(\loss) \defeq \inf_{\theta \in \RR}\left[ \theta + \frac{1}{1-\beta} \exx[(\loss - \theta)_{+}] \right]
\end{align*}
where $0 < \beta < 1$ controls the degree of right-tail sensitivity, and $(\cdot)_{+} \defeq \max\{0,\cdot\}$. Under the zero-one loss $\loss = \loss_{\textup{01}}$, error probability minimizers are optimal in terms of CVaR, i.e.,
\begin{align*}
\Herr \subset \argmin_{h \in \HH} \textup{CVaR}(\loss_{\textup{01}}(h)).
\end{align*}
\end{thm}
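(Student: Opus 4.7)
The plan is to exploit the fact that under the zero-one loss the distribution of $\loss_{\textup{01}}(h)$ is fully determined by the single scalar $p(h) \defeq \err{h}$, namely it is Bernoulli with parameter $p(h)$. Consequently $\textup{CVaR}(\loss_{\textup{01}}(h))$ reduces to an explicit function of $p(h)$, and it will suffice to verify that this function is monotone non-decreasing in $p$: minimizers of $\err{\cdot}$ then automatically minimize $\textup{CVaR}(\loss_{\textup{01}}(\cdot))$.

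First I would compute $\exx[(\loss_{\textup{01}}(h) - \theta)_{+}]$ as a function of $\theta$ by splitting into the three regimes $\theta \leq 0$, $0 \leq \theta \leq 1$, and $\theta \geq 1$. Substituting into the definition of CVaR yields a piecewise affine objective in $\theta$. On $(-\infty,0]$ the slope equals $-\beta/(1-\beta) < 0$, so the minimum on this branch is attained at $\theta = 0$ with value $p(h)/(1-\beta)$. On $[0,1]$ the slope equals $1 - p(h)/(1-\beta)$, so the minimum on this branch is $p(h)/(1-\beta)$ when $p(h) \leq 1-\beta$ and is $1$ when $p(h) \geq 1-\beta$. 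On $[1,\infty)$ the objective equals $\theta$, minimized at $\theta = 1$ with value $1$.

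Comparing across the three regimes collapses the infimum into the compact formula
\[
\textup{CVaR}(\loss_{\textup{01}}(h)) = \min\left\{\frac{p(h)}{1-\beta},\ 1\right\},
\]
which is manifestly a monotone non-decreasing (and continuous) function of $p(h) = \err{h}$. Hence for any $h^{\ast} \in \Herr$ and any $h \in \HH$ we have $\err{h^{\ast}} \leq \err{h}$, and therefore $\textup{CVaR}(\loss_{\textup{01}}(h^{\ast})) \leq \textup{CVaR}(\loss_{\textup{01}}(h))$, giving the claimed inclusion $\Herr \subset \argmin_{h \in \HH} \textup{CVaR}(\loss_{\textup{01}}(h))$.

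There is no substantive obstacle; the calculations are routine once one reduces to a Bernoulli. The only point requiring a bit of care is the case split on $\theta$: naively restricting the infimum to, say, $\theta \in [0,1]$ would obscure the fact that the optimal value jumps from the interior expression $p(h)/(1-\beta)$ to the constant $1$ exactly at $p(h) = 1 - \beta$, and one should also verify that the branches agree at the boundary to see that the resulting function of $p(h)$ is monotone rather than merely piecewise monotone.
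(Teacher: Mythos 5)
Your proof is correct: the three-regime computation of $\theta \mapsto \theta + \exx[(\loss_{\textup{01}}(h)-\theta)_{+}]/(1-\beta)$ is accurate, the closed form $\textup{CVaR}(\loss_{\textup{01}}(h)) = \min\{\err{h}/(1-\beta),\,1\}$ is the standard Bernoulli CVaR, and monotonicity in $\err{h}$ immediately yields $\Herr \subset \argmin_{h \in \HH}\textup{CVaR}(\loss_{\textup{01}}(h))$. However, your route differs from the paper's. The paper never computes CVaR explicitly for a Bernoulli loss; instead it obtains the theorem as a corollary of a general collapse result for distribution-dependent criteria (Proposition \ref{prop:collapse_broad_ocelike}): for any non-decreasing $\rho$, comparing two candidates $h_{1},h_{2}$ with $\err{h_{1}} \leq \err{h_{2}}$ at the optimal threshold $\theta_{2}$ of the \emph{worse} candidate shows the optimized criterion is monotone in the error probability, and CVaR is then just the special case $\rho(u) = (u)_{+}/(1-\beta)$ (Remark \ref{rmk:oce_risks}). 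What your explicit computation buys is transparency about the exact value: the flat region of $\min\{p/(1-\beta),1\}$ beyond $p = 1-\beta$ makes it immediately visible when CVaR-optimal solutions need \emph{not} be error-probability-optimal (the situation the paper handles separately in \S\ref{sec:cvar_01_diverge} via quantile case analysis), and the boundary-matching check you flag is precisely what guarantees global rather than merely piecewise monotonicity. What the paper's abstract argument buys is generality: the same one-page monotonicity chain simultaneously covers tilted risk, all OCE criteria, and (with a minor modification) Cressie--Read DRO, none of which admit such a simple closed form under Bernoulli losses.
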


The basic message of Theorems \ref{thm:collapse_DRO} and \ref{thm:collapse_CVaR} is clear: under losses with a Bernoulli distribution, it is essentially meaningless to consider DRO and CVaR as distinct from the expected value, since minimizing the expected loss is always sufficient for optimality in terms of DRO and CVaR as well. In \S{\ref{sec:collapse_broad}} to follow, we show that this ``collapse'' into error probability minimizers arises for criteria going well beyond DRO and CVaR.

\subsection{Which classes lead to collapse?}\label{sec:collapse_broad}

Here we take a look at several large classes of criteria, showing how most collapse in the sense illustrated in \S{\ref{sec:collapse_existing}}.

\subsubsection{Expectation of fixed function}\label{sec:collapse_broad_fixed}

Starting with the simplest class, we consider criteria that are computed as
\begin{align}
\loss \mapsto \exx[f(\loss)]
\end{align}
where $f:\RR \to \RR$ is any function such that the expectation is finite. By setting $\loss = \loss_{\textup{01}}$ and reflecting dependence on $h \in \HH$, note that
\begin{align*}
\exx[f(\loss_{\textup{01}}(h))] = f(0) + \err{h}\left(f(1)-f(0)\right)
\end{align*}
where $\err{\cdot}$ is the error probability defined in (\ref{eqn:defn_err_Herr}). If $f(0)=f(1)$ happens to hold, the criterion is constant and thus not very interesting. When $f(0) \neq f(1)$, minimizers of $\exx[f(\loss_{\textup{01}}(\cdot))]$ are easy to characterize: they either minimize or maximize $\err{\cdot}$, depending on whether $f(1) > f(0)$ or not. As such, no matter how $f$ is designed, the result of minimizing such a criterion will always be one of these two extremes.

\subsubsection{Quantiles}\label{sec:collapse_broad_quantiles}

As a natural alternative to the mean, let us consider a standard definition of quantiles, namely the $\beta$-level ``left quantile'' of the random loss, defined for $\loss \in \LL$ by
\begin{align}\label{eqn:defn_quantiles}
\mathrm{Q}_{\beta}(\loss) \defeq \min\{x \in \RR: \prr{\{\loss \leq x\}} \geq \beta\}
\end{align}
for all $0 < \beta \leq 1$, and $\mathrm{Q}_{\beta}(\loss) \defeq -\infty$ for $\beta = 0$. While in general the relation between quantiles and the mean can be very complicated, for the special case of Bernoulli losses, this relation is very simple for all choices of $\beta$, and we can readily show how the quantile criterion collapses to the mean.
\begin{prop}[Collapse of left quantiles]\label{prop:collapse_quantiles}
Given $\mathrm{Q}_{\beta}$ as defined in (\ref{eqn:defn_quantiles}) and $\loss = \loss_{\textup{01}}$, we have
\begin{align*}
\Herr \subset \argmin_{h \in \HH} \mathrm{Q}_{\beta}(\loss_{\textup{01}}(h))
\end{align*}
for all probability levels $0 \leq \beta \leq 1$.
\end{prop}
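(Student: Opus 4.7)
The plan is to exploit the fact that $\loss_{\textup{01}}(h)$ takes only two values, so its CDF is a step function with at most two jumps: one at $0$ (of height $1 - \err{h}$) and one at $1$ (of height $\err{h}$). Consequently, for any $h \in \HH$ and any $0 < \beta \leq 1$, the defining set $\{x \in \RR : \prr\{\loss_{\textup{01}}(h) \leq x\} \geq \beta\}$ from (\ref{eqn:defn_quantiles}) can only yield a minimum of $0$ or $1$. More precisely, $\mathrm{Q}_{\beta}(\loss_{\textup{01}}(h)) = 0$ iff $1 - \err{h} \geq \beta$, and otherwise $\mathrm{Q}_{\beta}(\loss_{\textup{01}}(h)) = 1$. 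The $\beta = 0$ case is trivial since $\mathrm{Q}_0(\loss) = -\infty$ for every $\loss$, making every $h \in \HH$ a minimizer.

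For the $0 < \beta \leq 1$ case, I would run a two-case argument based on the minimum achievable error probability. Let $h^{\ast} \in \Herr$ (if $\Herr = \emptyset$ the conclusion is vacuous). In the first case, suppose $\err{h^{\ast}} \leq 1 - \beta$. Then $\mathrm{Q}_{\beta}(\loss_{\textup{01}}(h^{\ast})) = 0$ by the equivalence above, which is the global minimum of $\mathrm{Q}_{\beta}$ over nonnegative-valued losses, so $h^{\ast}$ is a $\mathrm{Q}_{\beta}$-minimizer. Since every $h \in \Herr$ shares the same error probability $\err{h^{\ast}}$, every element of $\Herr$ is likewise a $\mathrm{Q}_{\beta}$-minimizer.

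In the second case, $\err{h^{\ast}} > 1 - \beta$, meaning every $h \in \HH$ satisfies $\err{h} \geq \err{h^{\ast}} > 1 - \beta$, so $\mathrm{Q}_{\beta}(\loss_{\textup{01}}(h)) = 1$ uniformly over $\HH$. The quantile criterion is then constant on $\HH$, and $\argmin_{h \in \HH} \mathrm{Q}_{\beta}(\loss_{\textup{01}}(h)) = \HH \supset \Herr$.

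There is no real obstacle here; the only subtlety is handling the boundary value $\beta = 0$ separately (where the quantile is $-\infty$) and being careful about the strict versus non-strict inequality in the definition of $\mathrm{Q}_{\beta}$, so that the equivalence $\mathrm{Q}_{\beta}(\loss_{\textup{01}}(h)) = 0 \iff \err{h} \leq 1 - \beta$ is stated correctly. Both cases combine to give $\Herr \subset \argmin_{h \in \HH} \mathrm{Q}_{\beta}(\loss_{\textup{01}}(h))$ for every $0 \leq \beta \leq 1$, as claimed.
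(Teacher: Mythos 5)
Your proposal is correct and follows essentially the same route as the paper: compute $\mathrm{Q}_{\beta}(\loss_{\textup{01}}(h))$ explicitly as a two-valued function of $\err{h}$ (it equals $0$ iff $\err{h} \leq 1-\beta$, and $1$ otherwise) and then conclude by a short case analysis on whether the minimal error probability clears the threshold $1-\beta$. Your case split organized around $\err{h^{\ast}}$ versus $1-\beta$, together with the explicit treatment of $\beta=0$, is just a slightly tidier packaging of the paper's corner-case discussion, not a different argument.
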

\begin{rmk}[Related case: right quantiles]
Another very similar discussion can be carried out with the ``right quantiles'' defined by
\begin{align}
\mathrm{Q}_{\beta}^{-}(\loss) \defeq \max\{x \in \RR: \prr{\{\loss < x\}} \leq \beta\}
\end{align}
for $0 \leq \beta < 1$, and $\mathrm{Q}_{\beta}^{-}(\loss) \defeq \infty$ for $\beta = 1$.
\end{rmk}

\subsubsection{Distribution dependent functions}\label{sec:collapse_broad_ocelike}

As a natural extension to the rudimentary ``fixed function expectation'' criteria seen in \S{\ref{sec:collapse_broad_fixed}}, one can consider a family of functions $f(\cdot;\theta)$ parameterized by $\theta \in \RR$, for which the value of $\theta$ is allowed to be determined based on the $\loss \in \LL$ being evaluated. While countless possibilities exist, one class of criteria that captures many important special cases in the literature is to take $f(u;\theta) = \theta + \rho(u-\theta)$, and to \emph{optimize} with respect to $\theta$ after taking expectation. As a typical example, consider
\begin{align}\label{eqn:defn_oce_like}
\underline{\crit}_{\rho}(\loss) \defeq \inf_{\theta \in \RR}\left[ \theta + \exx[\rho(\loss-\theta)] \right]
\end{align}
where $\rho: \RR \to \RR$ used in (\ref{eqn:defn_oce_like}) is assumed to be such that for each $\loss \in \LL$, the function $\theta \mapsto \theta + \exx[\rho(\loss-\theta)]$ achieves its minimum on $\RR$, and that $\inf\{\underline{\crit}_{\rho}(\loss) \colonset \loss \in \LL\} > -\infty$. As we will discuss later, special cases of the learning criterion $\underline{\crit}_{\rho}(\cdot)$, as well as close variants, arise frequently in the literature, particularly in the case where $\rho$ is monotonic (non-decreasing) and convex on $\RR$. As the following result shows, under the zero-one loss, monotonicity alone is sufficient to imply collapse into error probability minimizers.
\begin{prop}[Collapse under monotonic dispersion]\label{prop:collapse_broad_ocelike}
With $\underline{\crit}_{\rho}$ as defined in (\ref{eqn:defn_oce_like}), we have
\begin{align*}
\Herr \subset \argmin_{h \in \HH} \underline{\crit}_{\rho}(\loss_{\textup{01}}(h))
\end{align*}
when $\rho(\cdot)$ is non-decreasing, and equality when $\rho(\cdot)$ is increasing.
\end{prop}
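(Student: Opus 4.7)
The plan is to exploit the Bernoulli structure of $\loss_{\textup{01}}(h)$ to write $\underline{\crit}_{\rho}(\loss_{\textup{01}}(h))$ as an explicit function of the scalar $p = \err{h}$, and then show that this function is monotone in $p$, strictly so under the stronger hypothesis.

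\textbf{Step 1: Reduce to a scalar inner objective.} Since $\loss_{\textup{01}}(h)$ takes only the values $0$ and $1$, with $\prr\{\loss_{\textup{01}}(h) = 1\} = \err{h}$, I would write
\begin{align*}
\exx[\rho(\loss_{\textup{01}}(h) - \theta)] = (1-\err{h})\rho(-\theta) + \err{h}\,\rho(1-\theta),
\end{align*}
so that, setting $p = \err{h}$,
\begin{align*}
\underline{\crit}_{\rho}(\loss_{\textup{01}}(h)) = \inf_{\theta \in \RR} F(\theta, p), \qquad F(\theta,p) \defeq \theta + \rho(-\theta) + p\bigl[\rho(1-\theta) - \rho(-\theta)\bigr].
\end{align*}
Denote $G(p) \defeq \inf_{\theta} F(\theta,p)$, so that $\underline{\crit}_{\rho}(\loss_{\textup{01}}(h)) = G(\err{h})$.

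\textbf{Step 2: Monotonicity of $G$ under non-decreasing $\rho$.} Because $1-\theta \geq -\theta$ for all $\theta \in \RR$, non-decreasingness of $\rho$ gives $\rho(1-\theta) - \rho(-\theta) \geq 0$. Thus for each fixed $\theta$, $p \mapsto F(\theta,p)$ is non-decreasing. Taking the infimum over $\theta$ preserves this, so $G$ is non-decreasing on $[0,1]$. Hence if $h^{\ast} \in \Herr$ then $\err{h^{\ast}} \leq \err{h}$ for all $h$, which gives $G(\err{h^{\ast}}) \leq G(\err{h})$ and therefore $h^{\ast} \in \argmin_{h \in \HH} \underline{\crit}_{\rho}(\loss_{\textup{01}}(h))$. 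This establishes the containment.

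\textbf{Step 3: Equality under strictly increasing $\rho$.} Now $\rho(1-\theta) - \rho(-\theta) > 0$ strictly for every $\theta$, so $F(\theta,\cdot)$ is strictly increasing for each $\theta$. The assumption in (\ref{eqn:defn_oce_like}) that the infimum defining $\underline{\crit}_{\rho}$ is attained is the key ingredient here: for $p < p'$, pick $\theta'$ attaining $G(p')$ and chain the inequalities
\begin{align*}
G(p) \leq F(\theta', p) < F(\theta', p') = G(p'),
\end{align*}
so $G$ is strictly increasing on $[0,1]$. Therefore $G(\err{h}) = G(\err{h^{\ast}})$ forces $\err{h} = \err{h^{\ast}}$, which proves $\argmin_{h} \underline{\crit}_{\rho}(\loss_{\textup{01}}(h)) \subset \Herr$ and yields equality.

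The only subtle point is ensuring the strict inequality survives passage through the infimum in Step 3, and this is precisely handled by the standing attainment hypothesis on $\theta \mapsto \theta + \exx[\rho(\loss-\theta)]$; everything else is direct algebraic manipulation of the Bernoulli expression.
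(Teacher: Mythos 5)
Your proposal is correct and follows essentially the same route as the paper's proof: both expand $\exx[\rho(\loss_{\textup{01}}(h)-\theta)]$ over the Bernoulli distribution, fix $\theta$ at the minimizer attained for the larger error probability, and use $\rho(1-\theta)\geq\rho(-\theta)$ (strictly, when $\rho$ is increasing) to chain the same inequalities; your $G(p)\leq F(\theta',p)<F(\theta',p')=G(p')$ is exactly the paper's displayed chain in the $F,G$ notation. The reliance on the attainment assumption to secure the strict inequality matches the paper as well, so there is nothing substantive to distinguish the two arguments.
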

\noindent%
As we note in the following remarks, a wide range of well-known criteria are captured by Proposition \ref{prop:collapse_broad_ocelike}, or can be shown to collapse using an analogous argument.
\begin{rmk}[Special case: OCE criteria]\label{rmk:oce_risks}
When $\rho$ in (\ref{eqn:defn_oce_like}) is assumed to be a closed, convex, sub-differentiable function, normalized such that $\rho(0)=0$ and $1 \in \partial\rho(0)$, which is monotonically non-decreasing on $\RR$, the resulting learning criterion is called an \term{optimized certainty equivalent (OCE)} (see also Figure \ref{fig:demo_valid_rho_rhotilde}). Clearly, setting $\rho(u) = u$ recovers the expected value. Another well-known special case of OCE criteria is CVaR, which is recovered by setting $\rho(u) = \max\{0,u\}/(1-\beta)$ with $0 < \beta < 1$. Clearly this $\rho$ is non-decreasing, so the inclusion in Proposition \ref{prop:collapse_broad_ocelike} holds; note that Theorem \ref{thm:collapse_CVaR} is thus implied. On the other hand, the $\rho$ used for CVaR is not strictly increasing. Under the zero-one loss, are there CVaR-optimal solutions that are \emph{not} optimal in terms of error probability? This is indeed possible, but it depends on both $\HH$ and $\beta$; we discuss this in detail in \S{\ref{sec:cvar_01_diverge}}. Another well-studied special case from the OCE class is the \term{tilted risk} (also \term{entropic risk}), recovered by setting $\rho(u) = (\mathrm{e}^{\gamma u}-1)/\gamma$ with $\gamma > 0$, taking the form
\begin{align}\label{eqn:tilted_risk_identity}
\underline{\crit}_{\rho}(\loss) = \frac{1}{\gamma} \log\left(\exx\left[\mathrm{e}^{\gamma\loss}\right]\right).
\end{align}
Note that since the exponential function is monotonically increasing, so is $\rho$, and thus by Proposition \ref{prop:collapse_broad_ocelike}, tilted risk minimization under $\loss = \loss_{\textup{01}}$ is identical to $\err{\cdot}$ minimization. To conclude this remark, we note that more generally, the monotonicity of all OCE risks immediately implies that all $\err{\cdot}$-optimal rules are also $\underline{\crit}_{\rho}$-optimal, even without the other assumptions of convexity and normalization.
\end{rmk}
\begin{rmk}[Related case: Cressie-Read DRO]\label{rmk:dro_cressieread}
An important class of \term{distributionally robust optimization (DRO)} criteria is defined in a very similar way to the OCE class given in (\ref{eqn:defn_oce_like}). In particular, using the Cressie-Read family of $f$-divergences leads to criteria of the form
\begin{align*}
\textup{DRO}_{c,\varepsilon}(\loss) \defeq \inf_{\theta \in \RR} \left[ \theta + \left(\exx\left[\rho_{c,\varepsilon}(\loss_{\textup{01}}(h)-\theta)\right]\right)^{1/c_{\ast}} \right]
\end{align*}
where we have defined $\rho_{c,\varepsilon}(x) \defeq (1+c(c-1)\varepsilon)^{c_{\ast}/c}(x)_{+}^{c_{\ast}}$ and $c_{\ast} \defeq c/(c-1)$, and these criteria are parameterized by $c > 1$ and $\varepsilon \geq 0$. Note that the function $\rho_{c,\varepsilon}$ is clearly non-decreasing on $\RR$, just like in our discussion of CVaR in Remark \ref{rmk:oce_risks}. With $c > 1$, Cressie-Read DRO criteria are not strictly speaking OCE criteria, since the expected value is wrapped within $(\cdot)^{1/c_{\ast}}$. That said, an argument analogous to that in the proof of Proposition \ref{prop:collapse_broad_ocelike} using monotonicity can be used to show the same result, i.e., that $\Herr$ is included in the solution set of $\textup{DRO}_{c,\varepsilon}(\loss_{\textup{01}}(\cdot))$.
\end{rmk}
\begin{rmk}[Related case: criteria based on Orlicz regret]\label{rmk:orlicz_risk}
Let $f: \RR_{+} \to \overbar{\RR}_{+}$ be a proper, lower semi-continuous convex function, satisfying $f(1)=0$, $f(0) < \infty$, and super-coercivity in that $f(u)/u \to \infty$ as $u \to \infty$. Letting $f^{\ast}$ denote the usual convex conjugate of $f$, namely $f^{\ast}(u) \defeq \sup_{v \in \RR} [uv - f(v)]$, in recent work by \citet{frohlich2023a}, a class of learning criteria are introduced with the form
\begin{align}\label{eqn:defn_orlicz_regret_risk}
\underline{\crit}_{f,\varepsilon}(\loss) \defeq \inf_{\theta, \sigma} \left[ \sigma \left( \varepsilon + \theta + \exx\left[ f^{\ast}\left(\frac{\loss}{\sigma} - \theta\right) \right] \right) \right],
\end{align}
where the infimum is taken over $\theta \in \RR$ and $\sigma > 0$, and $\varepsilon > 0$ is a parameter of the criterion. Note that $f^{\ast}$ is finite, convex, and monotonically increasing on $\RR$ \citep[Prop.~3.2]{frohlich2023a}. Using this strong monotonicity property, just as we did for OCE criteria with increasing $\rho$, we can prove that under the zero-one loss, $\underline{\crit}_{f,\varepsilon}(\loss_{\textup{01}}(\cdot))$-optimal decisions coincide with the $\err{\cdot}$-optimal decisions.
\end{rmk}
\begin{rmk}[Non-monotonic alternative: variantile]\label{rmk:variantile}
The variance of a random variable, say $\loss$, can be naturally expressed as the minimum value of the function $\exx(\loss-\theta)^{2}$, with the minimum taken with respect to $\theta \in \RR$. The dispersion around $\theta$ here is measured in a symmetric fashion, since the same function is used both when $\loss>\theta$ and $\loss \leq \theta$. A natural \emph{asymmetric} extension considers re-scaling each of these cases with $2(1-\tau)$ and $2\tau$ respectively, where $\tau$ is a free parameter such that $0 < \tau < 1$. Writing this explicitly as a learning criterion, we have
\begin{align}\label{eqn:defn_variantile}
\underline{\crit}_{\tau}(\loss) \defeq \min_{\theta \in \RR} 2 \exx\left[ \abs{\mathbf{1}_{\theta}(\loss)-\tau}(\loss-\theta)^{2} \right]
\end{align}
where $\mathbf{1}_{\theta}(\loss) \defeq \indic\{\loss \leq \theta\}$. The value of $\theta$ that minimizes the function shown on the right-hand side of (\ref{eqn:defn_variantile}) is well-known in the economics literature as the ``expectile,'' and while the residual $\underline{\crit}_{\tau}(\loss)$ itself has received less attention, it has appeared in recent work under names such as ``variancile'' and ``variantile'' \citep{frongillo2021a}. Using the latter term here, the \term{variantile} in (\ref{eqn:defn_variantile}) generalizes the variance, with $\tau = 1/2$ recovering the variance as a special case. While of a similar nature to the $\underline{\crit}_{\rho}$ criteria give in (\ref{eqn:defn_oce_like}), the critical difference here is that $(\cdot)^{2}$ is not monotonic on $\RR$. As a result, collapse in the sense of Proposition \ref{prop:surrogate_inclusion} and Remarks \ref{rmk:oce_risks}--\ref{rmk:orlicz_risk} is not guaranteed, but instead the solution set collapses into the set of \emph{either} minimizers or maximizers of $\err{\cdot}$, analogous to \S{\ref{sec:collapse_broad_fixed}}. See \S{\ref{sec:variantile_collapse}} for a more detailed demonstration of this.
\end{rmk}

\section{Relationship with Surrogate Losses}\label{sec:surrogates}

Our analysis and discussion of criterion collapse in the preceding section was centered around losses with a Bernoulli distribution, which typically arise when using the zero-one loss $\ell_{\textup{01}}$ defined in (\ref{eqn:defn_zeroone}). While this loss function is ubiquitous as a metric for \emph{evaluation}, during training we rarely use $\ell_{\textup{01}}$ to design a criterion to be optimized directly. It is far more common to introduce a surrogate function, say $\ell(h;x,y)$, that is more congenial to numerical optimization. This results in there being two distinct loss distributions for each candidate $h$, namely that of the binary error $\loss_{\textup{01}}(h) = \ell_{\textup{01}}(h;\rdv{X},\rdv{Y})$ and the surrogate loss $\loss(h) = \ell(h;\rdv{X},\rdv{Y})$. Say we introduce some new criterion map $\crit(\cdot)$. We already know from \S{\ref{sec:collapse}} how most typical criteria collapse under $\loss_{\textup{01}}(h)$, but these insights do not apply in general for $\loss(h)$, which may have a much more complicated distribution than a simple Bernoulli. Placing our focus on the \emph{surrogate} criterion $\crit(\loss(\cdot))$, there are two natural instances of ``collapse'' with respect to error probability minimizers that we can conceive of:
\begin{align}
\label{eqn:sur_collapse_meainingless}
\Herr & \subset \argmin_{h \in \HH} \crit(\loss(h))\\
\label{eqn:sur_collapse_unavoidable}
\argmin_{h \in \HH} \crit(\loss(h)) & \subset \Herr.
\end{align}
In some learning scenarios, the properties (\ref{eqn:sur_collapse_meainingless}) and (\ref{eqn:sur_collapse_unavoidable}) may of course be desirable. In the case of (\ref{eqn:sur_collapse_meainingless}), which is the surrogate version of the collapse notion seen in \S{\ref{sec:collapse}}, we have that it is possible to be optimal in $\crit(\loss(\cdot))$ without having to make a sacrifice in terms of error probability $\err{\cdot}$. As for (\ref{eqn:sur_collapse_unavoidable}), in the traditional setup where $\crit(\cdot) = \exx[\cdot]$, proving that (\ref{eqn:sur_collapse_unavoidable}) holds is the central goal of ``classification calibrated'' surrogate design \citep{bartlett2006b}.

On the other hand, the inclusion in (\ref{eqn:sur_collapse_unavoidable}) means that error probability minimization is \emph{unavoidable} (under $\crit(\cdot)$ and $\ell(\cdot)$), which could have unintended repercussions in terms of other metrics such as fairness or privacy, for which tradeoffs with accuracy are well-known \citep{menon2021a}. In \S{\ref{sec:surrogates_nolink}} we first give a simple example showing how both (\ref{eqn:sur_collapse_meainingless}) and (\ref{eqn:sur_collapse_unavoidable}) can fail to hold under a limited model $\HH$. In contrast, when $\HH$ is a highly expressive model, we show in \S{\ref{sec:surrogates_unavoidable}} that for most important (binary) surrogate losses and a large class of criteria (in particular the ``tilted risk'' criteria), average error optimizers are in fact unavoidable (i.e., (\ref{eqn:sur_collapse_unavoidable}) holds). We complement this in \S{\ref{sec:surrogates_flood_softad}} by showing how criteria underlying typical ``ascent-descent'' gradient-based algorithms can be used to avoid unintentional collapse into $\Herr$.

\subsection{Disentangling the error and surrogate loss}\label{sec:surrogates_nolink}

For the remainder of \S{\ref{sec:surrogates}}, we will focus on the binary classification case, where our random data takes the form $\rdv{Z} = (\rdv{X},\rdv{Y})$, taking values in $\XX \times \YY$ with $\YY = \{-1,1\}$. Let $\mathcal{S}$ denote a set of scoring functions $s: \XX \to \RR$, with decision set $\HH \defeq \{\sign(s(\cdot)) \colonset s \in \mathcal{S}\}$ based on $\mathcal{S}$, noting that for any $u \in \RR$, $\sign(u)$ is $1$ when $u \geq 0$, and is $-1$ when $u < 0$. In this setting, we show how under a model $\HH$ with limited expressive power, it is very easy to construct an example where both notions of ``collapse'' just mentioned cannot hold.

\begin{figure*}[t!]
\centering
\includegraphics[width=0.33\textwidth]{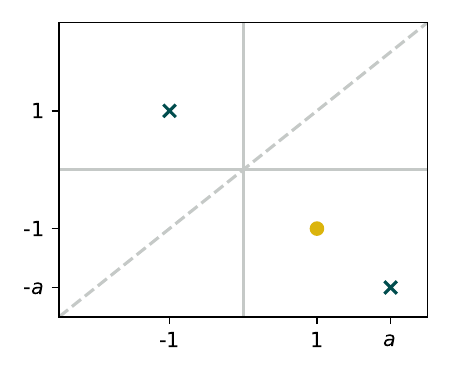}\includegraphics[width=0.33\textwidth]{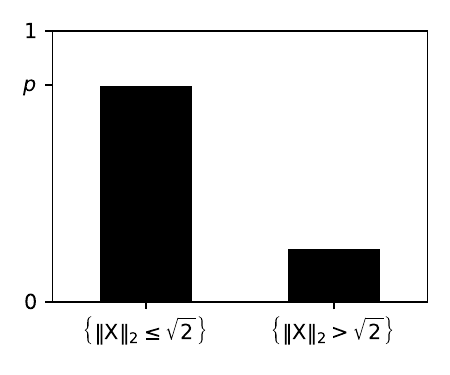}
\caption{In the left plot, we show the three possible data points that can arise in the example described in \S{\ref{sec:surrogates_nolink}}. Points above the dashed silver line are assigned a label of $1$ by $h_{1}$ and $-1$ by $h_{2}$; signs are reversed for all points below this line. For the outlying point in the bottom right, we have set $a=2$ in this example. In the right plot, we illustrate setting $p > 1/2$ to ensure the optimality of $h_{1}$ and $h_{2}$ in distinct criteria diverges.}
\label{fig:surrogate_nolink_data}
\end{figure*}

For simplicity and ease of visualization, consider an example on the plane, i.e., where $\XX = \RR^{2}$. Let $\rdv{X} = (\rdv{V}_{1}, \rdv{V}_{2})$, where $\rdv{V}_{1}$ and $\rdv{V}_{2}$ are real-valued random variables. Fixing an arbitrary $a > 1$, let $\rdv{V}_{1}$ take values in $\{-1,1,a\}$ and let $\rdv{V}_{2}$ take values in $\{-a,-1,1\}$. Assume that $\prr\{\rdv{X}=(-1,1)\} = \prr\{\rdv{X}=(1,-1)\} > 0$, and writing
\begin{align*}
p \defeq \prr\{\rdv{X}=(-1,1)\} + \prr\{\rdv{X}=(1,-1)\}
\end{align*}
let us say that $\prr\{\rdv{X}=(a,-a)\} = 1-p$ and $p < 1$ also hold. As for the labeling, let us assume
\begin{align}\label{eqn:surrogates_nolink_labels}
\rdv{Y} = \sign(\sqrt{2}-\Abs{\rdv{X}}_{2}) \sign(\rdv{V}_{2}-\rdv{V}_{1}).
\end{align}
Set $\mathcal{S} \defeq \{s_{1},s_{2}\}$, where $s_{1}(v_{1},v_{2}) \defeq v_{2}-v_{1}$ and $s_{2}(v_{1},v_{2}) \defeq v_{1}-v_{2}$ for all $v_{1},v_{2} \in \RR$. The resulting classifiers are $h_{1}(x) \defeq \sign(s_{1}(x))$ and $h_{2}(x) \defeq \sign(s_{2}(x))$ for $x \in \RR^{2}$. See Figure \ref{fig:surrogate_nolink_data} for a visualization of the three possible data points and the role of $p$. Error probabilities are $\prr\{h_{1}(\rdv{X}) \neq \rdv{Y}\} = 1-p$ and $\prr\{h_{2}(\rdv{X}) \neq \rdv{Y}\} = p$. When $p > 1/2$, clearly $\Herr = \{h_{1}\}$. As a typical surrogate loss, let us consider the usual binary ``logistic loss,'' namely the binary cross-entropy loss with scores converted to probabilities via the logistic function. Losses take the form $\ell_{\textup{log}}(s;x,y) \defeq \log(1+\exp(-ys(x)))$ for each $x \in \RR^{2}$ and $y \in \{-1,1\}$. Writing $\loss_{\textup{log}}(s) \defeq \ell_{\textup{log}}(s;\rdv{X},\rdv{Y})$, the surrogate loss distributions that result are quite simple; some arithmetic shows that we have
\begin{align*}
\loss_{\textup{log}}(s_{1}) & \in \{\log(1+\exp(-2)), \log(1+\exp(2a))\},\\
\loss_{\textup{log}}(s_{2}) & \in \{\log(1+\exp(-2a)), \log(1+\exp(2))\}
\end{align*}
with probability 1. For any fixed value of $a > 1$, it is always possible to take $0 < p < 1$ close enough to 1 that in terms of the \emph{expected} surrogate loss, we have
\begin{align*}
\argmin_{s \in \mathcal{S}} \exx[\loss_{\textup{log}}(s)] = \{s_{1}\}
\end{align*}
which is in agreement with the error probability minimizer. On the other hand, regardless of what value of $0 < p < 1$ is assumed, both the \emph{maximum} and \emph{minimum} losses incurred by $s_{2}$ are respectively smaller than those of $s_{1}$. That is,
\begin{align*}
\max_{\rdv{X},\rdv{Y}} \loss_{\textup{log}}(s_{2}) & < \max_{\rdv{X},\rdv{Y}} \loss_{\textup{log}}(s_{1}),\\
\min_{\rdv{X},\rdv{Y}} \loss_{\textup{log}}(s_{2}) & < \min_{\rdv{X},\rdv{Y}} \loss_{\textup{log}}(s_{1}).
\end{align*}
This simple observation means that criteria which tend to emphasize optimizing either the ``worst case'' or ``best case'' in terms of surrogate loss values will end up disagreeing with the error probability $\err{\cdot}$, and thus neither (\ref{eqn:sur_collapse_meainingless}) nor (\ref{eqn:sur_collapse_unavoidable}) can hold. This is not limited to the extreme case where the criterion map is $\crit(\cdot) = \max_{\rdv{X},\rdv{Y}}(\cdot)$ or $\crit(\cdot) = \min_{\rdv{X},\rdv{Y}}(\cdot)$. For example, analogous results can easily be shown to hold under the $\gamma$-tilted risk $\loss \mapsto (1/\gamma)\log(\exx[\mathrm{e}^{\gamma\loss}])$ (recalling Remark \ref{rmk:oce_risks}) for $\gamma \neq 0$ with $\abs{\gamma}$ sufficiently large, with $\gamma < 0$ emphasizing the best case, and $\gamma > 0$ emphasizing the worst case.

\subsection{Criteria that cannot avoid collapse}\label{sec:surrogates_unavoidable}

Let us assume that $\ell$ is a surrogate loss function which applies penalties to classification margins, i.e., assume that $\ell(s;x,y) \defeq \phi(ys(x))$, where margin penalizer $\phi: \RR \to \RR_{+}$ is assumed to be convex and non-negative over $\RR$, as well as differentiable at $0$; one typical example is $\ell = \ell_{\textup{log}}$ from \S{\ref{sec:surrogates_nolink}}. Let us also generalize beyond the expected loss criteria by considering the optimized certainty equivalent (OCE) criterion, defined for any $\loss \in \LL$ by
\begin{align}\label{eqn:defn_OCE}
\textup{OCE}(\loss) \defeq \inf_{\theta \in \RR} \left[ \theta + \exx[\rho(\loss - \theta)] \right]
\end{align}
and characterized by $\rho: \RR \to \RR$ satisfying the properties mentioned in Remark \ref{rmk:oce_risks}. Our running assumption is that for each $\loss \in \LL$, there is always a $\theta^{\ast} \in \RR$ such that $\textup{OCE}(\loss) = \theta^{\ast} + \exx[\rho(\loss - \theta^{\ast})]$ holds. We show below that for a large sub-class of OCE criteria and loss functions, criteria collapse in the sense of (\ref{eqn:sur_collapse_unavoidable}) often cannot be avoided.
\begin{prop}\label{prop:surrogate_inclusion}
Assume that the margin penalizer satisfies $\phi^{\prime}(0) < 0$, and that $\rho$ in (\ref{eqn:defn_OCE}) is increasing (not just non-decreasing) and differentiable. With $\HH \defeq \{\sign(s(\cdot)) \colonset s \in \mathcal{S}\}$, let $\mathcal{S}$ be the set of all measurable functions on $\XX$, and let $(s_{1},s_{2},\ldots)$ be a sequence from $\mathcal{S}$, with the resulting classifiers denoted by $h_{n}(\cdot) \defeq \sign(s_{n}(\cdot))$. Write $\loss_{\phi}(h) \defeq \ell(s_{h};\rdv{X},\rdv{Y})=\phi(\rdv{Y}s_{h}(\rdv{X}))$ for the losses induced by $\phi$. Taking $n \to \infty$, the following implication holds:
\begin{align*}
\textup{OCE}(\loss_{\phi}(h_{n})) \to \inf_{h \in \HH} \textup{OCE}(\loss_{\phi}(h)) \implies \err{h_{n}} \to \errMin,
\end{align*}
where $\errMin \defeq \inf_{h \in \HH} \err{h}$. In addition, minimizers of the OCE surrogate satisfy
\begin{align*}
\argmin_{h \in \HH} \textup{OCE}(\loss_{\phi}(h)) \subset \Herr.
\end{align*}
\end{prop}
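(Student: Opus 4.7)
The plan is to reduce both claims to a single $\theta$-parameterized family of convex margin-based surrogates. For each $\theta \in \RR$, set $\tilde\phi_\theta(u) := \rho(\phi(u) - \theta)$. Since $\rho$ is convex and non-decreasing (Remark \ref{rmk:oce_risks}) and $\phi$ is convex, $\tilde\phi_\theta$ is convex in $u$; differentiating at the origin gives $\tilde\phi_\theta'(0) = \rho'(\phi(0)-\theta)\phi'(0) < 0$, because $\rho$ is strictly increasing ($\rho'>0$) and $\phi'(0)<0$ by hypothesis. Hence each $\tilde\phi_\theta$ is a convex, classification-calibrated margin penalizer \citep{bartlett2006b}, equipped with a convex non-decreasing calibration function $\Psi_\theta$ vanishing only at zero, such that for every measurable $s : \XX \to \RR$,
\[
\Psi_\theta\bigl(\err{\sign(s)} - \errMin\bigr) \;\leq\; \exx[\tilde\phi_\theta(\rdv{Y}s(\rdv{X}))] - F^{\ast}(\theta), \qquad F^{\ast}(\theta) := \inf_{s \in \mathcal{S}} \exx[\tilde\phi_\theta(\rdv{Y}s(\rdv{X}))].
\]
Because $\mathcal{S}$ is the set of all measurable scores, $F^{\ast}(\theta)$ is realized by pointwise minimization in $x$, and with $\eta(x) := \prr{\{\rdv{Y}=1 \mid \rdv{X}=x\}}$ any pointwise minimizer $s^{\ast}$ satisfies $\sign(s^{\ast}(x)) = \sign(2\eta(x)-1)$ whenever $\eta(x) \neq 1/2$ (by convexity of $\tilde\phi_\theta$ and the negative slope at zero).

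For the set inclusion, take any $h^{\star} \in \argmin_{h \in \HH} \textup{OCE}(\loss_\phi(h))$ and use the running assumption to pick $\theta^{\star}$ with $\textup{OCE}(\loss_\phi(h^{\star})) = \theta^{\star} + \exx[\tilde\phi_{\theta^{\star}}(\rdv{Y}s_{h^{\star}}(\rdv{X}))]$. For any $s \in \mathcal{S}$, setting $\tilde h := \sign(s)$,
\[
\theta^{\star} + \exx[\tilde\phi_{\theta^{\star}}(\rdv{Y}s_{h^{\star}})] = \textup{OCE}(\loss_\phi(h^{\star})) \leq \textup{OCE}(\loss_\phi(\tilde h)) \leq \theta^{\star} + \exx[\tilde\phi_{\theta^{\star}}(\rdv{Y}s)],
\]
so $s_{h^{\star}}$ is a global minimizer of $s \mapsto \exx[\tilde\phi_{\theta^{\star}}(\rdv{Y}s(\rdv{X}))]$ over all measurable scores, and the pointwise sign characterization forces $h^{\star} \in \Herr$. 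For the convergence, let $\theta_n$ achieve the OCE infimum for $h_n$. Swapping the two infima gives $\inf_{h \in \HH} \textup{OCE}(\loss_\phi(h)) = \inf_{\theta \in \RR}[\theta + F^{\ast}(\theta)]$, and this is at most $\theta_n + F^{\ast}(\theta_n)$ for each $n$, so
\[
0 \;\leq\; \exx[\tilde\phi_{\theta_n}(\rdv{Y}s_n(\rdv{X}))] - F^{\ast}(\theta_n) \;\leq\; \textup{OCE}(\loss_\phi(h_n)) - \inf_{h \in \HH} \textup{OCE}(\loss_\phi(h)) \to 0.
\]
Plugging into the calibration inequality at $\theta=\theta_n$ yields $\Psi_{\theta_n}(\err{h_n}-\errMin) \to 0$.

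The main obstacle will be transferring this vanishing into $\err{h_n} \to \errMin$ when the threshold $\theta_n$ itself varies with $n$. The plan is to first show that $\theta_n$ stays in a compact interval: from $\rho(0)=0$, $\rho'(0)=1$ (since $1 \in \partial\rho(0)$ and $\rho$ is differentiable), and convexity, one gets $\rho(u) \geq u$, so $\textup{OCE}(\loss_\phi(h_n)) \geq \exx[\phi(\rdv{Y}s_n(\rdv{X}))] \geq 0$; combined with the first-order condition $\exx[\rho'(\phi(\rdv{Y}s_n(\rdv{X})) - \theta_n)] = 1$ and the strict monotonicity of $\rho'$, this excludes $\theta_n \to \pm\infty$. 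On a bounded $\theta$-interval, $\theta \mapsto \Psi_\theta$ is continuous and uniformly nondegenerate away from zero, so a subsequence argument converts $\Psi_{\theta_n}(\err{h_n}-\errMin) \to 0$ into $\err{h_n} \to \errMin$. If $\theta_n$ were permitted to escape, $\tilde\phi_{\theta_n}$ could asymptotically degenerate (e.g., become affine), causing $\Psi_{\theta_n}$ to lose strict positivity away from zero -- which is precisely why the compactness step deserves the most care.
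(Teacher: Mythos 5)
Your overall strategy is the same as the paper's: form the modified margin penalty $\rho(\phi(\cdot)-\theta)$, check it is convex with negative derivative at $0$ (hence classification calibrated in the sense of Bartlett et al.), lower-bound the excess OCE by the fixed-$\theta$ excess surrogate risk with $\theta$ frozen at the per-$n$ optimizer $\theta_n$, and then pass to the limit via boundedness of $(\theta_n)$ plus continuity of the calibration function in $\theta$. Your set-inclusion argument differs only cosmetically (pointwise Bayes-sign characterization of a global minimizer rather than positivity of the $\psi$-transform); note that pointwise minimizers of the conditional risk need not exist (e.g.\ $\eta(x)\in\{0,1\}$ with strictly decreasing $\phi$), so you should either argue almost-everywhere attainment of the conditional infimum by $s_{h^{\star}}$ or simply invoke the calibration inequality together with $\Psi_{\theta^{\star}}(u)>0$ for $u>0$, as the paper does. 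Also, to cite the Bartlett et al.\ machinery verbatim you should keep the additive $\theta$ (i.e.\ work with $\theta+\rho(\phi(u)-\theta)\ge\phi(u)\ge 0$), since your $\rho(\phi(u)-\theta)$ can be negative; the excess risks coincide, but the nonnegativity is what licenses the structural lemmas.

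The genuine gap is exactly where you flagged it: the compactness of $(\theta_n)$. As sketched, your argument would fail. First, $\rho^{\prime}$ is \emph{not} strictly monotone under the stated hypotheses: only $\rho$ increasing, convex and differentiable is assumed, so $\rho^{\prime}>0$ but $\rho^{\prime}$ may be constant on large regions (the identity $\rho(u)=u$ is admissible), and then the stationarity condition $\exx[\rho^{\prime}(\loss_{\phi}(h_n)-\theta_n)]=1$ can hold on an unbounded set of $\theta$ and pins down nothing. Second, even for strictly convex $\rho$, that first-order condition involves a loss distribution that changes with $n$, so by itself it cannot exclude $\theta_n\to\infty$ (losses and thresholds can drift together); and differentiating under the expectation needs justification. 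The paper's route avoids both issues: the lower bound $\theta_n\ge 0$ follows from nonnegativity of the surrogate losses via a standard OCE property (Ben-Tal and Teboulle), while the upper bound is extracted from the assumption that $(s_n)$ is an OCE-minimizing sequence (using $\rho(u)\ge u$, the slope normalization and monotonicity, an unbounded $\theta_n$ would force the criterion values to stay bounded away from the infimum). Relatedly, your claim that $\theta\mapsto\Psi_\theta$ is ``continuous and uniformly nondegenerate away from zero'' on compacts is asserted rather than proved; the paper establishes the needed continuity by writing $\psi(u;\theta)=\rho(\phi(0)-\theta)-\inf_{v}g(\theta,u;v)$ with $g$ jointly convex and using that partial minimization preserves convexity (hence continuity), and then gets positivity at the limiting $\theta$ along a convergent subsequence. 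With the compactness step repaired along these lines and the continuity argument supplied, your proof closes and coincides with the paper's.
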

\begin{rmk}[Classification calibrated surrogates]
Our assumptions on the margin penalizer $\phi(\cdot)$ in Proposition \ref{prop:surrogate_inclusion} amount to it being \term{classification calibrated} in the sense of \citet{bartlett2006b}, a basic property common in all popular surrogates for binary classification. Many choices of $\phi(\cdot)$ are allowed by Proposition \ref{prop:surrogate_inclusion}: in addition to monotonic losses such as the logistic loss $\phi(u) = \log(1+\exp(-u))$, exponential loss $\phi(u) = \exp(-u)$, and hinge loss $\phi(u) = \max\{0,1-u\}$, it also captures non-monotonic choices such as the quadratic loss $\phi(u) = (1-u)^{2}$, the ARC-X4 loss $\phi(u) = \abs{1-u}^{5}$ \citep{breiman1999a}, and shifted Huber-Catoni penalties \citep{holland2019b}.
\end{rmk}
\begin{rmk}[Strict monotonicity of $\rho$]
While the class of OCE criteria in its most general form just asks for $\rho$ to be non-decreasing, in Proposition \ref{prop:surrogate_inclusion} we require that it be strictly monotonic (increasing). Should a flat (zero slope) region be allowed to exist, it allows for $\textup{OCE}(\loss_{\textup{log}}(\cdot))$-optimal but $\err{\cdot}$-sub-optimal solutions to exist. This is akin to the ``positive steepness'' condition seen in related work on DRO \citep[Lem.~1]{hu2018a}. This means that the special case of CVaR, amounting to $\textup{OCE}(\cdot)$ with $\rho(u) = \max\{0,u\}/(1-\beta)$ and $0 < \beta < 1$, is excluded. On the other hand, the closely related $\gamma$-tilted risk, with $\rho(u) = (\mathrm{e}^{\gamma u}-1)/\gamma$ and $\gamma > 0$, is included. Under models with high expressive capacity, Proposition \ref{prop:surrogate_inclusion} suggests that methods such as tilted ERM may not always be a trustworthy choice for learning tasks where some evaluation metric of interest (e.g., fairness, interpretability) is at odds with error probability.
\end{rmk}

\subsection{Criteria that can avoid collapse}\label{sec:surrogates_flood_softad}

Here we consider alternatives to the class of criteria captured by the ``unavoidability'' result in Proposition \ref{prop:surrogate_inclusion}. To do so, we introduce minor changes to the functional form of the OCE criterion of (\ref{eqn:defn_OCE}), and highlight links to existing algorithms in the literature (see Remark \ref{rmk:surrogates_flood_softad}). Let $\widetilde{\rho}: \RR \to [0,\infty)$ be a continuous convex function which is coercive (i.e., $\abs{u} \to \infty$ implies $\widetilde{\rho}(u) \to \infty$), takes its minimum at 0 (assuming $\widetilde{\rho}(0) = 0$), and is strictly convex on a neighborhood of 0. We use the symbol $\widetilde{\rho}$ (instead of $\rho$ as in \S{\ref{sec:surrogates_unavoidable}}) because these assumptions imply that $\widetilde{\rho}$ cannot be monotonic on $\RR$. Let us denote the \term{inner} and \term{outer loss-restraining criterion maps} by $\crit_{\textup{inn}}(\cdot)$ and $\crit_{\textup{out}}(\cdot)$ respectively, defined for any $\loss \in \LL$ by
\begin{align}
\label{eqn:defn_Cinner}
\crit_{\textup{inn}}(\loss;\theta) & \defeq  \widetilde{\rho}(\exx[\loss]-\theta),\\
\label{eqn:defn_Couter}
\crit_{\textup{out}}(\loss;\theta) & \defeq  \exx[\widetilde{\rho}(\loss-\theta)].
\end{align}
See Figure \ref{fig:demo_valid_rho_rhotilde} for some examples of valid $\widetilde{\rho}$ compared with valid $\rho$ used for OCE criteria. Intuitively, $\crit_{\textup{inn}}(\loss;\theta)$ simply asks the learning algorithm to arrive at a loss distribution whose mean is ``close'' to some threshold $\theta$; when $\theta = 0$, $\crit_{\textup{inn}}(\loss;\theta)$ and $\exx[\loss]$ are equivalent (i.e., their solution sets coincide). In contrast, $\crit_{\textup{out}}(\loss;\theta)$ asks that the loss distribution be \emph{well-concentrated} near $\theta$. In both cases, the notions of ``closeness'' and ``concentration'' are broad, and can be asymmetric (e.g., heavier right tails than left tails are allowed) when $\widetilde{\rho}$ is. The following result shows how even under a highly expressive model as in \S{\ref{sec:surrogates_unavoidable}}, constraining the loss distribution by means of criteria such as $\crit_{\textup{inn}}$ and $\crit_{\textup{out}}$ lets us circumvent the unavoidability of Proposition \ref{prop:surrogate_inclusion}.
\begin{prop}\label{prop:surrogates_flood_softad}
Under the assumptions of Proposition \ref{prop:surrogate_inclusion}, assume that $\errMin = 0$ and the margin penalizer $\phi(\cdot)$ is monotonic (non-increasing) and unbounded above. Consider the non-trivial case where $\Herr \neq \HH$. Then, there exists $\theta > 0$ such that (\ref{eqn:sur_collapse_unavoidable}) cannot hold under either of the loss-restraining criteria, i.e., we have
\begin{align*}
\Herr \cap \argmin_{h \in \HH} \crit(\loss_{\phi}(h);\theta) = \emptyset
\end{align*}
for both $\crit = \crit_{\textup{inn}}$ and $\crit = \crit_{\textup{out}}$.
\end{prop}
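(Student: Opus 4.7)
My plan is to pick a single threshold $\theta > \phi(0)$, using that $\phi(0) > 0$ (forced by $\phi \geq 0$ together with $\phi^{\prime}(0) < 0$), and show that this one choice defeats both loss-restraining criteria simultaneously. The engine of the argument is a sharp asymmetry between scores representing $\Herr$ and those that do not. Since $\errMin = 0$ forces $\rdv{Y} = y^{\ast}(\rdv{X})$ almost surely for some deterministic label map $y^{\ast}$, any $s \in \mathcal{S}$ with $\sign(s(\cdot)) \in \Herr$ must satisfy $\rdv{Y} s(\rdv{X}) \geq 0$ a.s., so monotonicity of $\phi$ gives $\phi(\rdv{Y}s(\rdv{X})) \leq \phi(0)$ a.s. Conversely, convexity of $\phi$ together with $\phi^{\prime}(0) < 0$ makes $\phi$ strictly decreasing on $(-\infty,0]$, and unboundedness above makes $\phi$ map $(-\infty,0)$ continuously onto $(\phi(0),\infty)$. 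Hence for any $\theta > \phi(0)$ the level set $\phi^{-1}(\{\theta\})$ is a nonempty closed subset of $(-\infty,0)$; picking any $t^{\ast}$ in it, the anti-label score $s_{\ast}(x) \defeq t^{\ast} y^{\ast}(x)$ has $\sign(s_{\ast}(\cdot)) = -y^{\ast}(\cdot)$, which has error probability $1$ and is therefore not in $\Herr$.

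For $\crit_{\textup{inn}}$ I combine these observations directly. For any $\Herr$-representing $s$, $\exx[\phi(\rdv{Y}s(\rdv{X}))] \leq \phi(0) < \theta$, so $\widetilde{\rho}(\exx[\phi(\rdv{Y}s)] - \theta) > 0$; here I use the fact $\widetilde{\rho}(t) = 0$ iff $t=0$, which follows from a short convexity argument (if $\widetilde{\rho}$ vanished at some $t_{0} \neq 0$, the convex combination would force $\widetilde{\rho} \equiv 0$ on the interval between $0$ and $t_{0}$, contradicting strict convexity near zero). Meanwhile $s_{\ast}$ yields $\rdv{Y}s_{\ast}(\rdv{X}) = t^{\ast}$ a.s., so $\exx[\phi(\rdv{Y}s_{\ast})] = \phi(t^{\ast}) = \theta$ and $\crit_{\textup{inn}}$ attains its absolute minimum of $0$. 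The argmin is therefore $\{s : \exx[\phi(\rdv{Y}s)] = \theta\}$, which by the first inequality cannot contain any $\Herr$-representing score.

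For $\crit_{\textup{out}}$ the same $\theta$ works via a pointwise argument. Nonnegativity of $\widetilde{\rho}$ together with the zero-only-at-zero property gives $\crit_{\textup{out}}(\phi(\rdv{Y}s(\rdv{X})); \theta) = 0$ iff $\rdv{Y}s(\rdv{X}) \in \phi^{-1}(\{\theta\})$ a.s. Since this level set lies inside $(-\infty,0)$, every minimizer has $\rdv{Y}s(\rdv{X}) < 0$ a.s., forcing $\sign(s(\cdot)) = -y^{\ast}(\cdot)$ a.s.\ and hence error probability $1$; the score $s_{\ast}$ attains the minimum, so the argmin is non-empty and entirely disjoint from $\Herr$. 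The main subtleties I foresee are bookkeeping around the $\sign(0)=1$ convention and the possibility that $\phi^{-1}(\{\theta\})$ is a nondegenerate interval (as for hinge loss), but neither threatens the sign analysis since the level set stays in $(-\infty,0)$ as soon as $\theta > \phi(0)$.
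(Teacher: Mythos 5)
Your proof is correct, and it takes a somewhat different route than the paper's. Both arguments share the same two pillars: the uniform bound $\exx[\loss_{\phi}(h^{\ast})] \leq \phi(0)$ (indeed $\loss_{\phi}(h^{\ast}) \leq \phi(0)$ almost surely) for any $h^{\ast} \in \Herr$ when $\errMin = 0$, and the fact that $\widetilde{\rho} \geq 0$ vanishes only at the origin, so exhibiting a competitor with criterion value exactly $0$ settles sub-optimality of $\Herr$. The difference is in the competitor. The paper uses two separate constructions with two different thresholds: for $\crit_{\textup{inn}}$ it rescales the score of some $h \notin \Herr$ (using $\Herr \neq \HH$ and unboundedness of $\phi$) so that its expected loss rises above $\phi(0)$, then sets $\theta$ equal to that expected loss; for $\crit_{\textup{out}}$ it flips the sign of an optimal score, normalizes to unit margin, and takes $\theta = \phi(-1)$. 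You instead fix one $\theta > \phi(0)$, solve $\phi(t^{\ast}) = \theta$ on the negative half-line (using continuity of the finite convex $\phi$, strict decrease on $(-\infty,0]$ from $\phi^{\prime}(0)<0$, and unboundedness), and use the single constant-margin anti-label score $s_{\ast} = t^{\ast}y^{\ast}$, whose loss is identically $\theta$; this zeroes out both $\crit_{\textup{inn}}$ and $\crit_{\textup{out}}$ simultaneously. What your version buys: a single threshold valid for both criteria (matching the statement's quantifier structure more literally), no need for the large-$b$ rescaling trick, and no use of the hypothesis $\Herr \neq \HH$ (your witness has error probability $1$, so it lies outside $\Herr$ automatically, which in this full-expressivity setting shows that hypothesis is redundant anyway). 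What it costs is the small extra surjectivity step for $\phi$ on $(-\infty,0)$, which the paper's choices $\theta = \phi(0)+\varepsilon$ and $\theta=\phi(-1)$ avoid. Your justification that $\widetilde{\rho}(t)=0$ iff $t=0$ (convexity plus nonnegativity plus strict convexity near $0$) is the right argument and is only implicit in the paper; the measurability of $y^{\ast}$ you gloss over is harmless, since you may take $y^{\ast} = h^{\ast}$ for any $h^{\ast} \in \Herr$.
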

\begin{rmk}[Links to algorithms in the literature]\label{rmk:surrogates_flood_softad}
While we have introduced the loss-restraining criteria $\crit_{\textup{inn}}$ and $\crit_{\textup{out}}$ in (\ref{eqn:defn_Cinner}) and (\ref{eqn:defn_Couter}) in a rather general form, plugging in concrete examples of $\widetilde{\rho}$ can be shown to align with specific learning algorithms from the literature. Perhaps the best-known is that of ``Flooding,'' as studied by \citet{ishida2020a}. By setting $\widetilde{\rho}(\cdot) = \abs{\cdot}$ and applying sub-gradient descent with step-size $\alpha \geq 0$ to the resulting $\crit_{\textup{inn}}(\loss(\cdot);\theta)$ for any differentiable loss $\loss$, we end up with an update rule
\begin{align}\label{eqn:flooding_update}
h_{t+1} = h_{t} - \alpha \sign(\exx[\loss(h_{t})]-\theta)\exx[\nabla\loss(h_{t})].
\end{align}
In practice, of course, the true expectation is replaced with the empirical mean over a finite sample. From (\ref{eqn:flooding_update}) it is clear that as the sequence $(h_{1},h_{2},\ldots)$ progresses, as long as the expected loss is above threshold $\theta$, vanilla gradient descent (on the expected loss) is run. It is only if the expected loss falls below this threshold that the sign flips, changing the update to gradient \emph{ascent}. The original intuition behind this approach was simply to avoid over-optimizing a surrogate loss function, but since the original paper, links between this method and sharpness-aware minimization \citep{foret2021a} have been established; see for example \citet{karakida2023a}. The hard condition for switching between ascent and descent was softened in recent work by \citet{holland2023bddflood}; their approach can be interpreted in our framework as using $\crit_{\textup{out}}$ rather than $\crit_{\textup{inn}}$, and replacing the absolute value function with $\widetilde{\rho}(\cdot) = \sqrt{(\cdot)^{2}+1}-1$. The resulting gradient descent update looks like
\begin{align}\label{eqn:softad_update}
h_{t+1} = h_{t} - \alpha \exx[\widetilde{\rho}^{\prime}(\loss(h_{t})-\theta)\nabla\loss(h_{t})].
\end{align}
This procedure is called ``softened ascent-descent'' (SoftAD) by the authors; note that the hard switch of $\sign(\cdot)$ in (\ref{eqn:flooding_update}) is replaced in (\ref{eqn:softad_update}) by a smooth switch $\widetilde{\rho}^{\prime}(\cdot)$, plus the modulation is applied in a per-point fashion before averaging, rather than after. Given that the Flooding and SoftAD methods just described are captured as special cases in Proposition \ref{prop:surrogates_flood_softad}, we shall pay particular attention to them in our empirical analysis to follow in \S{\ref{sec:empirical}}.
\end{rmk}
\begin{rmk}[Alternative approaches in the literature]\label{rmk:avoidance_literature}
In the context of avoiding collapse of DRO criteria (e.g., Theorem \ref{thm:collapse_DRO}), \citet[\S{4}]{hu2018a} consider constraining the uncertainty set (our $\mathcal{P}$) such that the underlying data distribution is effectively controlled by a simple discrete latent variable. This nice trick allows them to express their ``structural ARM'' criterion in terms of a weighted sum of expected zero-one loss and variance-like terms (e.g., their equation 18), which in principle makes it possible to avoid collapse. Another clever approach to avoiding such collapse (this time for CVaR; Theorem \ref{thm:collapse_CVaR}) is to consider the task of learning an ensemble of $k$ candidates as done by \citet[\S{3}]{zhai2021b}, i.e., choosing $\mathbf{h} \defeq (h_{1},\ldots,h_{k})$ with each $h_{j} \in \HH$ and a weight vector $\lambda \defeq (\lambda_{1},\ldots,\lambda_{k})$. The underlying \emph{loss function} is then a weighted sum of zero-one losses, namely $\ell_{\textup{01}}^{(k)}(\mathbf{h},\lambda;x,y) \defeq \sum_{j=1}^{k}\lambda_{j}\ell_{\textup{01}}(h_{j};x,y)$, and thus in principle we can avoid collapse since the distribution is no longer Bernoulli. While both of these results describe settings in which collapse \emph{need not} occur under $\loss = \loss_{\textup{01}}$, collapse is still possible, and there are not conditions for avoiding collapse with surrogates (as in our Proposition \ref{prop:surrogates_flood_softad}).
\end{rmk}

\section{Empirical Analysis}\label{sec:empirical}

\begin{figure*}[t!]
\centering
\includegraphics[width=0.5\textwidth]{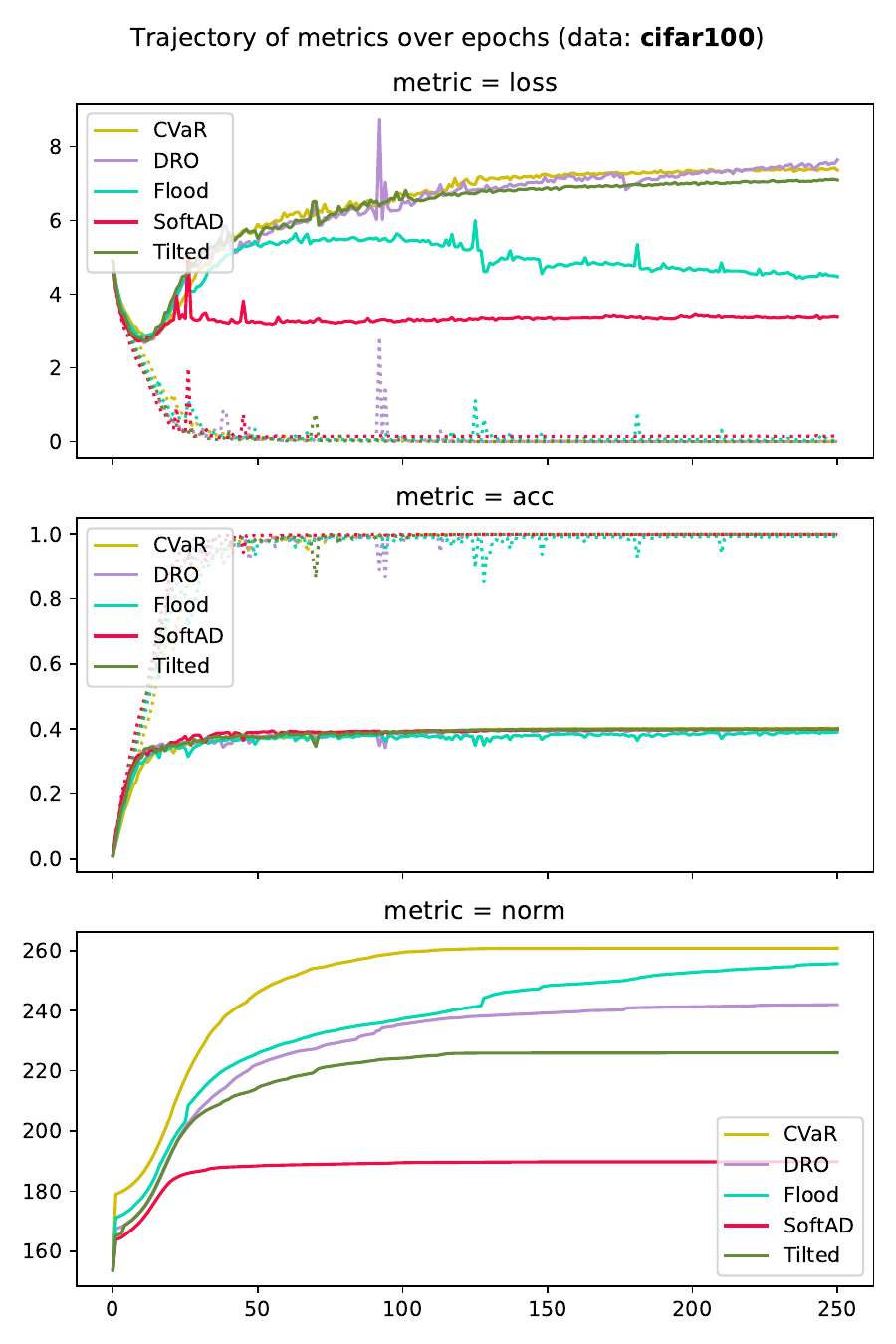}\includegraphics[width=0.5\textwidth]{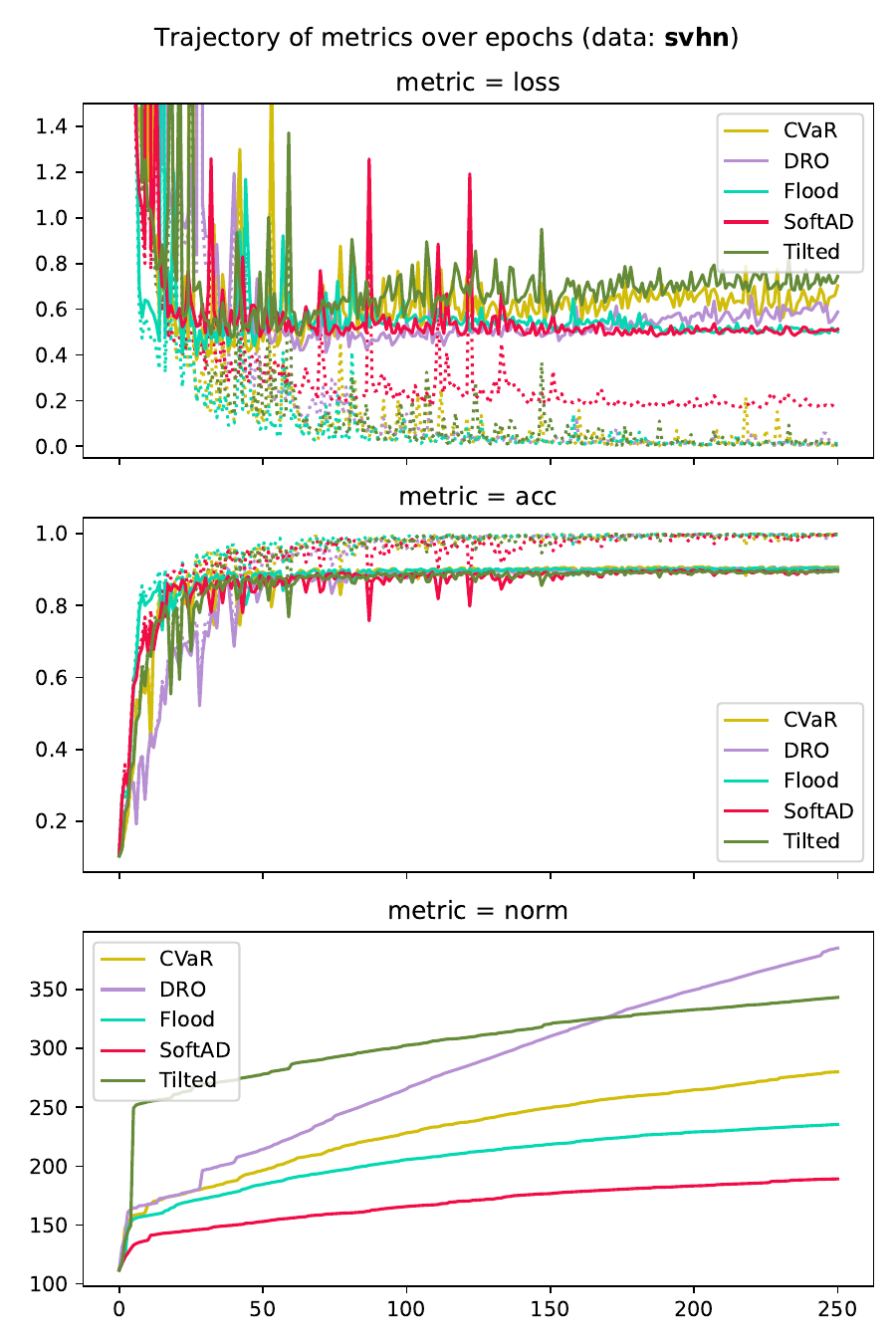}
\caption{Key metrics of interest (vertical axis) over epochs (horizontal axis). Here ``\texttt{loss}'' refers to average surrogate loss, ``\texttt{acc}'' refers to accuracy, and ``\texttt{norm}'' refers to the model L2 norm. Loss and accuracy are given for both training (dotted lines) and test data (solid lines). Plots on the left are for CIFAR-100, and plots on the right are for SVHN.}
\label{fig:benchmarks_1}
\end{figure*}

\begin{table*}[h!]
\begin{center}
\begin{tabular}{|c||c|c|c|c|}
\hline
 & CIFAR-10 & CIFAR-100 & FashionMNIST & SVHN \\
\hline\hline
CVaR & 0.14 (0.15) & 0.28 (0.13) & 0.46 (0.13) & 0.20 (0.12)\\
\hline
DRO & 0.02 (0.04) & 0.0 (0.0) & 0.0 (0.0) & 0.17 (0.20)\\
\hline
Flood & 0.05 (0.06) & 0.03 (0.05) & 0.01 (0.0) & 0.01 (0.0)\\
\hline
SoftAD & 0.06 (0.05) & 0.15 (0.13) & 0.01 (0.0) & 0.19 (0.19)\\
\hline
Tilted & 0.0 (0.0) & 0.0 (0.0) & 0.0 (0.0) & 0.0 (0.0)\\
\hline
\end{tabular}
\end{center}
\caption{Hyperparameter values selected for each method to maximize validation accuracy (averaged over trials). Standard deviation (again over trials) is given in parentheses.}
\label{table:benchmarks_hyperparams}
\end{table*}

Within our theoretical analysis of \S{\ref{sec:surrogates}}, we contrasted learning criteria that can and cannot avoid collapse into error probability minimizers. In particular, we emphasized the role of monotonicity in measuring the dispersion of losses around a threshold ($\rho$ vs.~$\widetilde{\rho}$). DRO, CVaR, and tilted ERM are typical monotonic methods, whereas Flooding and SoftAD are representative non-monotonic methods. To complement the points established in the previous section, here we implement the aforementioned methods and apply them to the training of non-linear neural network models for classification. Our interest is in \emph{tradeoffs}. Namely, having optimized for their respective learning criterion (under a base surrogate loss), we compare performance in terms of three alternative metrics, namely average surrogate loss, accuracy, and model complexity measured by the norm of model parameters.

\paragraph{Setup}
Our experiment design essentially follows the experimental design described by \citet{ishida2020a} (who first proposed the Flooding technique). The basic learning task is image classification from scratch. We use four standard datasets: CIFAR-10, CIFAR-100, FashionMNIST, and SVHN, accessed via the \texttt{torchvision} package. Each of these datasets has a default training-test split, which we leave fixed across all randomized trials. In each trial, however, we shuffle the training data and do an 80-20 split for training-validation. For CIFAR-10/100, we use ResNet-34; for FashionMNIST we flatten the images and use a simple feed-forward network (one hidden layer, 1000 units, batch normalization, ReLU activation); finally, for SVHN, we use ResNet-18. As an optimizer, we use vanilla SGD with step size 0.1 and momentum 0.9, run for 250 epochs, with mini-batch size of 200. In all cases, no pre-training is done; initial weights are randomly determined anew for each trial, and we note that the initial weights for any given trial are common across all methods being tested. The ``base'' surrogate loss function used for training is the multi-class cross-entropy loss.

\paragraph{Code}
A GitHub repository with code and seed values to re-create all the results presented in this paper is available at this URL: \url{https://github.com/feedbackward/collapse}.

\paragraph{Methods}
We are comparing the impact of different learning criteria, so the choice of ``method'' here amounts to choosing from the following: CVaR, DRO, Flooding, SoftAD, and tilted ERM. For DRO, we use Cressie-Read divergences (Remark \ref{rmk:dro_cressieread}), with $c=2$ fixed and $\varepsilon$ as a hyperparameter. For CVaR (cf.~Theorem \ref{thm:collapse_CVaR}), the quantile level $\beta$ is the hyperparameter. For both CVaR and DRO, we optimize the threshold $\theta$ simultaneously with model parameters. Tilted ERM has a closed form (\ref{eqn:tilted_risk_identity}) that we use, with hyperparameter $\gamma$. For the non-monotonic methods (Remark \ref{rmk:surrogates_flood_softad}), Flooding is $\crit_{\textup{inn}}$ with $\widetilde{\rho}(u) = \abs{u}$, and SoftAD is $\crit_{\textup{out}}$ with $\widetilde{\rho}(u) = \sqrt{u^{2}+1}-1$; both have threshold $\theta$ as a hyperparameter. For each method, we consider ten distinct hyperparameter candidates, as follows. CVaR: $\beta$ ranges between 0 and 0.9. DRO: with $\varepsilon = (1/(1-\widetilde{\varepsilon})-1)^{2}/2$, $\widetilde{\varepsilon}$ ranges between 0 and 0.9. Flooding: $\theta$ ranges between 0.01 and 1.0. SoftAD: $\theta$ ranges between 0.01 and 0.75. Tilted ERM: $\gamma$ ranges between 0 and 2.0. We remark that vanilla ERM is captured as a special case of the three monotonic methods ($\beta = 0$, $\varepsilon = 0$, and $\gamma = 0$).

\paragraph{Results}
For evaluation, before and after each epoch, we record three metrics: average surrogate loss, average accuracy (one minus average zero-one loss), and the L2 norm of model parameters. Here ``average'' refers to taking an average over each relevant data set (training, validation, test). Furthermore, we have run five independent trials, and the results to be presented are averaged over all trials. To choose a representative hyperparameter setting for each method, we select the value which maximizes the \emph{validation accuracy}. In Figures \ref{fig:benchmarks_1} and \ref{fig:benchmarks_2} we show the trajectory of our metrics of interest over epochs (with epoch 0 being the initial state). In Table \ref{table:benchmarks_hyperparams}, we show the hyperparameter values that were selected for each method (here also, averaged over trials).

\paragraph{Discussion}
There are several salient points that can be extracted from the results just described. First, it is clear that in terms of accuracy, all methods that we have tested include hyperparameter settings which can achieve near-perfect training accuracy, with test accuracy essentially the same across all methods. Note that for all trials and all datasets, tilted ERM is such that optimizing for validation accuracy leads to a setting of $\gamma = 0$, namely vanilla ERM. Second, while the ``best accuracy'' levels are essentially uniform across the methods tested, from the perspective of \emph{tradeoffs}, we see that things are far less uniform. In terms of average surrogate loss and model norm metrics, there is substantial dispersion between methods. We see that the two non-monotonic methods tend to achieve a far better average \emph{test} surrogate loss than the monotonic methods, with SoftAD being unique in terms of realizing a small model norm as well. Considering these trends (in Figures \ref{fig:benchmarks_1} and \ref{fig:benchmarks_2}) together with the selection trends (in Table \ref{table:benchmarks_hyperparams}), the results seem to suggest that even weakly constraining the surrogate loss distribution via $\crit_{\textup{inn}}$ and $\crit_{\textup{out}}$ (corresponding to Flooding and SoftAD here) looks to offer an appealing balance between the metrics we have studied here. Whether or not this behavior can be extended in a straightforward manner to metrics from other domains (e.g., fairness, prediction calibration, etc.) remains to be seen.

\section{Concluding Remarks}

In this work, we have formulated and studied the notion of criterion collapse in machine learning, with particular emphasis placed on settings in which \emph{unintended} collapse may occur, and how this phenomenon changes when surrogate functions are introduced into the picture. By highlighting situations in which unintentional collapse is possible, we are hoping to slowly but surely build a methodological roadmap of sorts, highlighting potential risks with model/algorithm design decisions, something analogous to the ``AI model cards'' that have attracted interest in recent years \citep{liang2024a}. By complementing our theory with experiments, we try to highlight how learning algorithms derived from distinct criteria classes (here the key difference is monotonicity) can lead to rather different algorithm behavior and learned model properties, both during and after training. The idea is to take these empirical insights together with the aforementioned basic principles to guide smart AI system design when our goals are more diverse than just ``maximize the accuracy on a big training set.''

\section*{Acknowledgements}

This work was supported by JST PRESTO Grant Number JPMJPR21C6, and a grant from the SECOM Science and Technology Foundation.

\clearpage

\appendix

\section{Additional Notes}

\paragraph{Tradeoffs between metrics}
The navigation between ``measures'' and their ``proxies'' is a well-known challenge in the field of public policy design, and has recently been highlighted as an important challenge by researchers at OpenAI.\footnote{\url{https://openai.com/research/measuring-goodharts-law}}

\paragraph{Surveys on learning criteria}
In recent years, several in-depth surveys looking at trends in criterion design for machine learning and related disciplines have appeared. See \citet{holland2023survey}, \citet{hu2023a}, and \citet{royset2022arxiv} for representative work.

\paragraph{OCE and DRO references}
See \citet{lee2020a} and \citet[\S{3.2}]{holland2023survey} for background on OCE criteria in machine learning, and \citet{bental2007a} for an authoritative initial reference. Tilted ERM with $\gamma > 0$ is a special case of OCE, but it should be noted that a more general notion of ``tilted ERM'' can be considered by using the right-hand side of (\ref{eqn:tilted_risk_identity}) with general $\gamma \neq 0$ \citep{li2021a}. For $\gamma < 0$ the link to OCE criteria breaks down (reducing the role of right-side tails instead of emphasizing them), but leads to an array of different applications \citep{li2023a}. For the well-known special case of CVaR, see \citet{rockafellar2000a} for the landmark paper that presents the form of CVaR used in our Theorem \ref{thm:collapse_CVaR}. See \citet{kashima2007a} and \citet{takeda2008a} for early links to machine learning. CVaR can be derived as a limiting case of DRO under a specific $f$-divergence (\citep[Ex.~3]{duchi2021a}). For more general results on DRO with uncertainty sets constructed using on $f$-divergences, see \citet{namkoong2016a} and \citet{duchi2021a}.

\paragraph{Tail-sensitive criteria}
The basic argument of \S{\ref{sec:surrogates_nolink}} can be extended well beyond the tilted ERM criterion to a wide variety of other criteria including many OCE risks and their ``inverted'' variants; see \citet{lee2020a} or \citet{holland2023survey} for some representative examples of such criteria.

\paragraph{References on surrogate design}
There is a very large literature on the topic of designing surrogate functions under the $\exx[\cdot]$ criterion. For binary classification, see for example \citet{bartlett2006b} and \citet{reid2010a,reid2011a}. For the multi-class setting, see \citet{williamson2016a}, and the references within.

\begin{rmk}[Coherent risks]
One of the most well-known and well-studied classes of risk functions is that of ``coherent risk'' criteria.\footnote{See \citet[\S{3}]{rockafellar2013a} for background.} Coherent risks are characterized by several properties, including a very weak notion of monotonicity, which simply requires that for any random losses $\loss$ and $\loss^{\prime}$, a criterion map $\crit(\cdot)$ satisfies the following:
\begin{align}\label{eqn:coherence_weak_monotonicity}
\loss \leq \loss^{\prime} \text{ (almost surely) } \implies \crit(\loss) \leq \crit(\loss^{\prime}).
\end{align}
While a very natural property in a general context, when we restrict ourselves to Bernoulli losses, with say $\loss = \loss_{\textup{01}}(h_{1})$ and $\loss^{\prime} = \loss_{\textup{01}}(h_{2})$ based on candidates $h_{1},h_{2} \in \HH$, this monotonicity property becomes rather vacuous. Just consider when the condition in (\ref{eqn:coherence_weak_monotonicity}) actually can hold. It can only be valid in two cases: either we have $\loss = \loss^{\prime}$ (almost surely) or $\loss < \loss^{\prime}$ (almost surely). Note that for $\loss < \loss^{\prime}$ to hold almost surely, we must have $\loss = 0$ and $\loss^{\prime} = 1$ almost surely. In any case, we are dealing with either constant random variables or identical distributions, so the notion of ``monotonicity'' only really becomes interesting for more complicated distributions.
\end{rmk}

\paragraph{Gaps between theory and practice?}
As discussed in \S{\ref{sec:surrogates_unavoidable}}, our Proposition \ref{prop:surrogate_inclusion} could be interpreted as a ``negative result'' showing that criteria like tilted ERM (with positive tilt) cannot avoid error probability minimization. On the other hand, such criteria have been applied with effect to classification under class imbalance learning problems (e.g., \citet[\S{7.2.4}]{li2023a}). We remark that there is no conflict between our results and such empirical results, since in practice, we only have limited expressive power and more importantly limited capability to optimize a typically non-convex objective (viewed as a function on $\HH$). When effective model capacity is constrained, it is easy to find settings in which the zero-one loss mean is not controlled by relevant surrogate loss criterion (e.g., our example from \S{\ref{sec:surrogates_nolink}}).

\paragraph{Notes on experiment results}
Looking at Table \ref{table:benchmarks_hyperparams}, note that for CIFAR-100 and FahsionMNIST, in all five trials, the hyperparameter setting for both DRO and Tilted ERM was zero, i.e., vanilla ERM was selected. In the case of FashionMNIST, the trajectories in Figure \ref{fig:benchmarks_2} (right plot) align perfectly, but for CIFAR-100 (Figure \ref{fig:benchmarks_1}, left plot), they do not; this is due to randomness inherent in the PyTorch implementation of ResNet-34 (no pre-training) that we have left uncontrolled.

\section{Proofs}

\subsection{Proofs of results given in the main text}

\begin{proof}[Proof of Proposition \ref{prop:collapse_quantiles}]
Fixing $h \in \HH$ and taking $0 < \beta \leq 1-\err{h}$, we know that for any choice of $x \geq 0$ we have $\prr{\{\loss_{\textup{01}}(h) \leq x\}} \geq \beta$, but this fails for any $x < 0$. Thus we have $\mathrm{Q}_{\beta}(\loss_{\textup{01}}(h)) = 0$ for any $\beta$ in this range. For larger probability levels, namely where $1-\err{h} < \beta \leq 1$, we know that $\prr{\{\loss_{\textup{01}}(h) \leq x\}} \geq \beta$ holds for any $x \geq 1$, but fails for any $x < 1$, giving us $\mathrm{Q}_{\beta}(\loss_{\textup{01}}(h)) = 1$ for this range. Taking these facts together, we have
\begin{align*}
\mathrm{Q}_{\beta}(\loss_{\textup{01}}(h)) =
\begin{cases}
1, & \text{ if } 1-\err{h} < \beta \leq 1 \\
0, & \text{ if } 0 < \beta \leq 1-\err{h}.
\end{cases}
\end{align*}
Now, consider the task of minimizing $\mathrm{Q}_{\beta}(\loss_{\textup{01}}(h))$ as a function of $h \in \HH$ given a fixed level $0 < \beta < 1$. Note that in all situations, we can always say that
\begin{align}\label{eqn:collapse_quantiles_1}
\Herr \subset \argmin_{h \in \HH} \mathrm{Q}_{\beta}(\loss_{\textup{01}}(h)).
\end{align}
This should be intuitively clear since achieving a small quantile amounts to achieving a sufficiently small error probability, though the probability need not be \emph{minimal}. For completeness, let us deal with the corner cases. If $\min_{h \in \HH} \err{h} > 1-\beta$, then regardless of what $h$ we choose, the value is $\mathrm{Q}_{\beta}(\loss_{\textup{01}}(h)) = 1$, and so every $h \in \HH$ is trivially ``optimal.'' The same trivial optimality arises if $\max_{h \in \HH} \err{h} \leq 1-\beta$, since all $h$ result in $\mathrm{Q}_{\beta}(\loss_{\textup{01}}(h)) = 0$. When neither of these conditions hold, choosing $h \in \HH$ such that the error probability is small enough gives us the desired minimum quantile value. In all cases, the inclusion (\ref{eqn:collapse_quantiles_1}) holds.
\end{proof}

\begin{proof}[Proof of Proposition \ref{prop:collapse_broad_ocelike}]
For convenience, let us write
\begin{align*}
\crit_{\rho}(\loss;\theta) \defeq \theta + \exx[\rho(\loss-\theta)]
\end{align*}
for any $\loss \in \LL$ and $\theta \in \RR$. Since our running assumption is that the minimum (in $\theta$) is achieved on $\RR$, we have that $\underline{\crit}_{\rho}(\loss) = \min_{\theta \in \RR}\crit_{\rho}(\loss;\theta)$. Let $h_{1}, h_{2} \in \HH$ be any candidates such that $\err{h_{1}} \leq \err{h_{2}}$. In addition, let $\theta_{1}$ and $\theta_{2}$ respectively minimize $\crit_{\rho}(\loss_{\textup{01}}(h_{1});\cdot)$ and $\crit_{\rho}(\loss_{\textup{01}}(h_{2});\cdot)$. With these assumptions in place, the following chain of inequalities holds:
\begin{align*}
\underline{\crit}_{\rho}(\loss_{\textup{01}}(h_{1})) & = \crit_{\rho}(\loss_{\textup{01}}(h_{1});\theta_{1})\\
& \leq \crit_{\rho}(\loss_{\textup{01}}(h_{1});\theta_{2})\\
& = \theta_{2} + \rho(-\theta_{2}) + \err{h_{1}}\left( \rho(1-\theta_{2}) - \rho(-\theta_{2}) \right)\\
& \leq \theta_{2} + \rho(-\theta_{2}) + \err{h_{2}}\left( \rho(1-\theta_{2}) - \rho(-\theta_{2}) \right)\\
& = \underline{\crit}_{\rho}(\loss_{\textup{01}}(h_{2})).
\end{align*}
Note that the second inequality uses the assumed monotonicity of $\rho$, which implies $\rho(1-\theta_{2}) \geq \rho(-\theta_{2})$. Since the choice of $h_{1}$ and $h_{2}$ was arbitrary, we can immediately conclude for any $h_{1},h_{2} \in \HH$ that
\begin{align*}
\err{h_{1}} \leq \err{h_{2}} \implies \underline{\crit}_{\rho}(\loss_{\textup{01}}(h_{1})) \leq \underline{\crit}_{\rho}(\loss_{\textup{01}}(h_{2})).
\end{align*}
Another immediate conclusion is that
\begin{align}\label{eqn:oce_like_inclusion}
\Herr \subset \argmin_{h \in \HH} \underline{\crit}_{\rho}(\loss_{\textup{01}}(h)).
\end{align}
If $\rho$ is monotonically \emph{increasing} (rather than just non-decreasing), then it follows that
\begin{align*}
\err{h_{1}} < \err{h_{2}} \implies \underline{\crit}_{\rho}(\loss_{\textup{01}}(h_{1})) < \underline{\crit}_{\rho}(\loss_{\textup{01}}(h_{2})).
\end{align*}
This means that one cannot be sub-optimal in $\err{\cdot}$ while still being optimal in $\underline{\crit}_{\rho}(\loss_{\textup{01}}(\cdot))$. As such, the two solution sets must coincide:
\begin{align}\label{eqn:oce_like_coincide}
\Herr = \argmin_{h \in \HH} \underline{\crit}_{\rho}(\loss_{\textup{01}}(h)).
\end{align}
Taking the conditions for (\ref{eqn:oce_like_inclusion}) and (\ref{eqn:oce_like_coincide}), we have the desired result.
\end{proof}

\begin{proof}[Proof of Proposition \ref{prop:surrogate_inclusion}]
To get started, we would like to express the OCE criterion in terms of a modified penalty applied to binary classification margins. Let us denote the modified penalty based on $\phi$ by
\begin{align}
\widetilde{\phi}(u;\theta) \defeq \theta + \rho(\phi(u)-\theta)
\end{align}
with the corresponding $\theta$-dependent expectation denoted by
\begin{align}
\crit_{\phi}(s;\theta) \defeq \exx[\widetilde{\phi}(\rdv{Y}s(\rdv{X});\theta)].
\end{align}
By our running assumptions, there always exists a minimizer of $\theta \mapsto \crit_{\phi}(s;\theta)$ on $\RR$, which for convenience we will denote by
\begin{align}
\theta^{\ast}(s) \in \argmin_{\theta \in \RR} \crit_{\phi}(s;\theta).
\end{align}
With this notation in hand, it follows that for each $s \in \mathcal{S}$ we have
\begin{align}\label{eqn:surrogate_inclusion_OCEoptimized}
\crit_{\phi}(s;\theta^{\ast}(s)) = \textup{OCE}(\phi(\rdv{Y}s(\rdv{X}))).
\end{align}
Moving forward, we will want to study sequences $(s_{1},s_{2},\ldots)$ of functions in $\mathcal{S}$ that minimize the OCE criterion in terms of the surrogate loss. More formally, let us assume $(s_{1},s_{2},\ldots)$ is any sequence satisfying
\begin{align}\label{eqn:surrogate_inclusion_minseq}
\lim\limits_{n \to \infty} \textup{OCE}(\phi(\rdv{Y}s_{n}(\rdv{X}))) = \inf_{s \in \mathcal{S}} \textup{OCE}(\phi(\rdv{Y}s(\rdv{X})))
\end{align}
With such a sequence in hand, note that there exists a value $\overbar{\theta} > 0$ such that for all integers $n \geq 1$, we have
\begin{align}\label{eqn:surrogate_inclusion_compact}
0 \leq \theta^{\ast}(s_{n}) \leq \overbar{\theta}.
\end{align}
The lower bound in (\ref{eqn:surrogate_inclusion_compact}) holds because $\phi(\cdot) \geq 0$, and thus the support of the surrogate losses $\phi(\rdv{Y}s(\rdv{X}))$ is bounded below by zero.\footnote{That this implies $0 \leq \theta^{\ast}(s)$ for any $s \in \mathcal{S}$ is a basic property of OCE criteria; see for example \citet[Prop.~2.1]{bental2007a}.} We cannot however guarantee that the support of $\phi(\rdv{Y}s(\rdv{X}))$ is bounded above. That said, $(s_{1},s_{2},\ldots)$ is not a completely arbitrary sequence; from (\ref{eqn:surrogate_inclusion_minseq}) we are assuming this sequence minimizes the OCE criterion. If the sequence $\theta^{\ast}(s_{n})$ were to grow without bound, it would contradict (\ref{eqn:surrogate_inclusion_minseq}).\footnote{To see this, just note that the $\theta$ term in the OCE criterion increases (with $\theta$) no slower than the $\rho(\cdot - \theta)$ term decreases (again, as $\theta$ grows), due to the slope, convexity, and monotonicity conditions on $\rho$.} As such, while the choice of $\overbar{\theta} < \infty$ may depend on the sequence, (\ref{eqn:surrogate_inclusion_minseq}) always implies (\ref{eqn:surrogate_inclusion_compact}).

Next, we would like to link up surrogate losses (under the OCE criterion) to the error probability $\err{\cdot}$. To do this, first note that the excess value in the OCE criterion can be bounded below by
\begin{align}
\nonumber
\textup{OCE}(\phi(\rdv{Y}s(\rdv{X}))) - \inf_{s \in \mathcal{S}}\textup{OCE}(\phi(\rdv{Y}s(\rdv{X}))) & = \crit_{\phi}(s;\theta^{\ast}(s)) - \inf_{s^{\prime} \in \mathcal{S}}\left[ \inf_{\theta \in \RR}\crit_{\phi}(s^{\prime};\theta) \right]\\
\label{eqn:surrogate_inclusion_firstineq}
& \geq \crit_{\phi}(s;\theta^{\ast}(s)) - \inf_{s^{\prime} \in \mathcal{S}} \crit_{\phi}(s^{\prime};\theta^{\ast}(s)).
\end{align}
Next let us note that the normalization of $\rho$ (slope 1, value 0 at the origin) combined with its convexity and monotonicity immediately implies $\rho(u) \geq u$ for all $u \in \RR$, and thus for any choice of $\theta \in \RR$, we have $\widetilde{\phi}(u;\theta) \geq \theta + (\phi(u)-\theta) = \phi(u)$. As such, it follows that
\begin{align}\label{eqn:surrogate_inclusion_nonneg}
\phi(\cdot) \geq 0 \implies \widetilde{\phi}(\cdot;\theta) \geq 0.
\end{align}
This fact is useful because it allows us to leverage structural results of \citet{bartlett2006b}. To do so, let us denote the so-called ``conditional $\widetilde{\phi}$-risk'' by
\begin{align}
C_{\beta}(u;\theta) \defeq \beta \widetilde{\phi}(u;\theta) + (1-\beta)\widetilde{\phi}(-u;\theta)
\end{align}
for each $u \in \RR$, $\theta \in \RR$, and $0 \leq \beta \leq 1$. Note that we can connect this function to our modified penalty through the equality
\begin{align}\label{eqn:surrogate_inclusion_condlink}
\crit_{\phi}(s;\theta) = \exx_{\rdv{X}}\left[ C_{\beta(\rdv{X})}(s(\rdv{X});\theta) \right]
\end{align}
that holds for any choice of $\theta \in \RR$ and $s \in \mathcal{S}$, where we have denoted $\beta(x) \defeq \prr\{\rdv{Y}=1 \cond \rdv{X}=x\}$. Optimal values of this function (with and without constraints) are denoted by
\begin{align}
H^{-}(\beta;\theta) & \defeq \inf \left\{ C_{\beta}(u;\theta) \colonset u(2\beta-1) \leq 0 \right\},\\
H(\beta;\theta) & \defeq \inf_{u \in \RR} C_{\beta}(u;\theta).
\end{align}
Here $H(\beta;\theta)$ corresponds to the optimal conditional risk computed in terms of $\widetilde{\phi}$, and $H^{-}(\beta;\theta)$ is used to quantify the best possible risk when a score (i.e., with $u$ as $s(x)$) disagrees with the Bayes optimal score (i.e., $(2\beta-1)$ with $\beta$ as $\prr\{\rdv{Y}=1 \cond \rdv{X}=x\}$). The so-called ``$\psi$-transform'' is based on the gap between these two optimal values. Denoting
\begin{align}
\widetilde{\psi}(u;\theta) \defeq H^{-}\left(\frac{1+u}{2};\theta\right) - H\left(\frac{1+u}{2};\theta\right)
\end{align}
for all $-1 \leq u \leq 1$, define $\psi(\cdot;\theta) \defeq \widetilde{\psi}^{\ast\ast}(\cdot;\theta)$, namely the Fenchel-Legendre biconjugate of $\widetilde{\psi}$. The first key structural results we wish to leverage are as follows:
\begin{align}\label{eqn:surrogate_inclusion_basicprops}
\psi(\cdot;\theta) \text{ is continuous on } [0,1], \quad \psi(\cdot;\theta) \geq 0, \quad \psi(0;\theta) = 0.
\end{align}
The properties in (\ref{eqn:surrogate_inclusion_basicprops}) hold for all $\theta \in \RR$.\footnote{These properties follow from Lemma 2 (parts 6, 7, and 8) of \citet{bartlett2006b} after applying (\ref{eqn:surrogate_inclusion_nonneg}).} Furthermore, using (\ref{eqn:surrogate_inclusion_nonneg}) and applying the proof of \citet[Thm.~1(1)]{bartlett2006b}, for any choice of $s \in \mathcal{S}$ and $\theta \in \RR$, we have
\begin{align}\label{eqn:surrogate_inclusion_errlink}
\psi\left(\err{h_{s}}-\errMin;\theta\right) \leq \exx_{\rdv{X}}\left[ C_{\beta(\rdv{X})}(s(\rdv{X});\theta) - H\left(\beta(\rdv{X});\theta\right) \right],
\end{align}
where $h_{s}(\cdot) \defeq \sign(s(\cdot))$, $\errMin \defeq \min_{h \in \HH} \err{h}$ and $\beta(x) \defeq \prr\{\rdv{Y}=1 \cond \rdv{X}=x\}$ as before. By assuming that $\mathcal{S}$ is the set of all measurable functions on $\XX$, we have
\begin{align}\label{eqn:surrogate_inclusion_optlink}
H\left(\beta(\rdv{X});\theta\right) = \inf_{s^{\prime} \in \mathcal{S}} \exx_{\rdv{X}}\left[ C_{\beta(\rdv{X})}(s^{\prime}(\rdv{X});\theta) \right]
\end{align}
for any choice of $\theta \in \RR$.\footnote{This fact is also used in the proof of \citet[Thm.~1(1)]{bartlett2006b}.} Combining (\ref{eqn:surrogate_inclusion_errlink}), (\ref{eqn:surrogate_inclusion_optlink}), and (\ref{eqn:surrogate_inclusion_condlink}) with the inequality (\ref{eqn:surrogate_inclusion_firstineq}) established earlier, we may conclude that
\begin{align}\label{eqn:surrogate_inclusion_mainineq}
\textup{OCE}(\phi(\rdv{Y}s(\rdv{X}))) - \inf_{s \in \mathcal{S}}\textup{OCE}(\phi(\rdv{Y}s(\rdv{X}))) \geq \psi\left(\err{h_{s}}-\errMin;\theta^{\ast}(s)\right)
\end{align}
holds for any choice of $s \in \mathcal{S}$. Applying this bound (\ref{eqn:surrogate_inclusion_mainineq}) to any sequence $(s_{1},s_{2},\ldots)$ satisfying (\ref{eqn:surrogate_inclusion_minseq}), it follows from the basic properties of (\ref{eqn:surrogate_inclusion_basicprops}) that
\begin{align}\label{eqn:surrogate_inclusion_zerolim}
\lim\limits_{n \to \infty} \psi\left(\err{h_{n}}-\errMin;\theta^{\ast}(s_{n})\right) = 0,
\end{align}
where we have denoted $h_{n}(\cdot) \defeq \sign(s_{n}(\cdot))$ for readability. All that remains is to analyze how the sequence $(\err{h_{1}},\err{h_{2}},\ldots)$ behaves.

In order to establish $\err{h_{n}} \to \errMin$, we need to utilize some additional properties of $\rho$. We have assumed that $\rho$ is differentiable, and can easily confirm that the first two derivatives of $\widetilde{\phi}(\cdot;\theta)$ are as follows:
\begin{align*}
\widetilde{\phi}^{\prime}(u;\theta) & = \rho^{\prime}(\phi(u)-\theta)\phi^{\prime}(u),\\
\widetilde{\phi}^{\prime\prime}(u;\theta) & = \rho^{\prime\prime}(\phi(u)-\theta)(\phi^{\prime}(u))^{2} + \rho^{\prime}(\phi(u)-\theta)\phi^{\prime\prime}(u).
\end{align*}
Since $\rho$ and $\phi$ are both assumed to be convex, and strict monotonicity of $\rho$ implies $\rho^{\prime}(\cdot) > 0$, we can immediately conclude that $\widetilde{\phi}^{\prime\prime}(\cdot;\theta) \geq 0$, and thus $\widetilde{\phi}(\cdot;\theta)$ is convex on $\RR$, for any choice of $\theta \in \RR$. Furthermore, since we have assumed $\phi^{\prime}(0) < 0$, it also follows that $\widetilde{\phi}^{\prime}(0;\theta) < 0$. This property along with convexity implies that we have
\begin{align}
\label{eqn:surrogate_inclusion_calibprop_1}
\psi(u;\theta) & > 0, \quad  0 < u \leq 1\\
\label{eqn:surrogate_inclusion_calibprop_2}
\psi(u;\theta) & = \widetilde{\phi}(0;\theta) - H\left(\frac{1+u}{2};\theta\right)
\end{align}
for any $\theta \in \RR$.\footnote{The properties (\ref{eqn:surrogate_inclusion_calibprop_1}) and (\ref{eqn:surrogate_inclusion_calibprop_2}) follow by applying Theorem 2 and Lemma 2(9) of \citet{bartlett2006b}.} Using the definitions of $\widetilde{\phi}$ and $H(\cdot;\theta)$ along with (\ref{eqn:surrogate_inclusion_calibprop_2}), we can write
\begin{align*}
\psi(u;\theta) = \rho(\phi(0)-\theta) - \inf_{v \in \RR}\left[ \left(\frac{1+u}{2}\right)\rho(\phi(v)-\theta) + \left(\frac{1-u}{2}\right)\rho(\phi(-v)-\theta) \right].
\end{align*}
Introducing the helper function
\begin{align*}
g(\theta,u;v) \defeq \left(\frac{1+u}{2}\right)\rho(\phi(v)-\theta) + \left(\frac{1-u}{2}\right)\rho(\phi(-v)-\theta),
\end{align*}
clearly we have $\psi(u;\theta) = \rho(\phi(0)-\theta) - \inf_{v \in \RR}g(\theta,u;v)$. Due to the monotonicity of $\rho$ and the convexity of both $\phi$ and $\rho$, it is easy to directly verify that $(\theta,v) \mapsto g(\theta,u;v)$ is convex on $\RR^{2}$, for any choice of $0 \leq u \leq 1$. In addition, partial minimization preserves convexity, so $\theta \mapsto \inf_{v \in \RR}g(\theta,u;v)$ is also convex and continuous.\footnote{See for example \citet[B.3.3 and B.1.3]{bertsekas2015ConvexOpt}.} As such, for any convergent sequence $(\theta_{1},\theta_{2},\ldots)$ and any choice of $0 \leq u \leq 1$, we have
\begin{align}\label{eqn:surrogate_inclusion_continuity}
\lim\limits_{n \to \infty}\psi(u;\theta_{n}) = \psi\left(u;\lim\limits_{n \to \infty}\theta_{n}\right).
\end{align}
With this fact established, let us say that under (\ref{eqn:surrogate_inclusion_minseq}), the sequence $\err{h_{n}}$ does not converge to $\errMin$. In such a case, there exists some $\varepsilon > 0$ such that
\begin{align*}
\limsup_{n \to \infty} \left[ \err{h_{n}} - \errMin \right] = \varepsilon.
\end{align*}
From the definition of $\limsup$, we can always find a convergent subsequence, denoted $(s_{1}^{\prime},s_{2}^{\prime},\ldots)$ such that $\err{h_{n}^{\prime}} - \errMin \to \varepsilon$, where $h_{n}^{\prime}(\cdot) \defeq \sign(s_{n}^{\prime}(\cdot))$. This implies that for all large enough $n$, we have $\err{h_{n}^{\prime}} - \errMin \geq \varepsilon/2 > 0$, and further that we have
\begin{align}\label{eqn:surrogate_inclusion_contra_1}
\psi(\err{h_{n}^{\prime}} - \errMin;\theta^{\ast}(s_{n}^{\prime})) \geq \psi(\varepsilon/2;\theta^{\ast}(s_{n}^{\prime})) > 0
\end{align}
for all such $n$.\footnote{The first inequality holds using the fact that monotonicity of $u \mapsto \psi(u;\theta)$ is implied by the properties of being minimized at zero (see (\ref{eqn:surrogate_inclusion_basicprops})) and convexity on $[0,1]$ for all $\theta \in \RR$; see Lemma 2(2) of \citet{bartlett2006b}. The second inequality follows from (\ref{eqn:surrogate_inclusion_calibprop_2}).} Noting that from (\ref{eqn:surrogate_inclusion_compact}), all elements of sequence $\theta^{\ast}(s_{n}^{\prime})$ must be included in a bounded interval (i.e., a compact set), there exists a sub-sequence of $(s_{n}^{\prime})$ that we denote as $(s_{1}^{\prime\prime},s_{2}^{\prime\prime},\ldots)$, such that $\theta^{\ast}(s_{n}^{\prime\prime})$ converges.\footnote{See for example \citet[Thm.~2.37]{rudin1976PMA3rd}.} Utilizing the continuity property (\ref{eqn:surrogate_inclusion_continuity}) and positivity (\ref{eqn:surrogate_inclusion_calibprop_1}), it follows that
\begin{align}
\lim\limits_{n \to \infty} \psi(\varepsilon/2;\theta^{\ast}(s_{n}^{\prime\prime})) =  \psi\left(\varepsilon/2; \lim\limits_{n \to \infty}\theta^{\ast}(s_{n}^{\prime\prime})\right) > 0.
\end{align}
This however would contradict (\ref{eqn:surrogate_inclusion_zerolim}), which we have already shown to be implied by any sequence satisfying (\ref{eqn:surrogate_inclusion_minseq}). As such, we can conclude that under (\ref{eqn:surrogate_inclusion_minseq}), we have $\err{h_{n}} \to \errMin$.

Finally, for our statement regarding the inclusion of solution sets, for the trivial case that no minimizer of $h \mapsto \textup{OCE}(\loss_{\phi}(h))$ exists, we have $\emptyset \subset \Herr$; let us consider the non-trivial case where a minimizer exists. Write such a minimizer by
\begin{align}\label{eqn:surrogate_inclusion_solexists}
s^{\ast} \in \argmin_{s \in \mathcal{S}} \textup{OCE}(\phi(\rdv{Y}s(\rdv{X}))).
\end{align}
Writing $h^{\ast}(\cdot) \defeq \sign(s^{\ast}(\cdot))$, if $\err{h^{\ast}} > \errMin$ were to hold, then via positivity (\ref{eqn:surrogate_inclusion_calibprop_1}), we would have a contradiction to the inequality (\ref{eqn:surrogate_inclusion_mainineq}). As such, $h^{\ast} \in \argmin_{h \in \HH} \err{h}$ must hold, concluding the proof.
\end{proof}

\begin{proof}[Proof of Proposition \ref{prop:surrogates_flood_softad}]
To start, fix a threshold $\theta = \theta_{\varepsilon} \defeq \phi(0) + \varepsilon$ using some $\varepsilon > 0$ to be determined later. Next, note that for any $h \in \HH$, the expected surrogate loss can be written as
\begin{align}\label{eqn:surrogates_flood_softad_0}
\exx[\loss_{\phi}(h)] = \exx[\indic\{h(\rdv{X}) \neq \rdv{Y}\}\loss_{\phi}(h)] + \exx[\indic\{h(\rdv{X}) = \rdv{Y}\}\loss_{\phi}(h)].
\end{align}
Now, taking an arbitrary $h^{\ast} \in \Herr$ and the assumption that $\err{h^{\ast}} = 0$, it follows that
\begin{align}\label{eqn:surrogates_flood_softad_1}
\exx[\indic\{h^{\ast}(\rdv{X}) \neq \rdv{Y}\}\loss_{\phi}(h^{\ast})] = 0.
\end{align}
Combining (\ref{eqn:surrogates_flood_softad_0}) and (\ref{eqn:surrogates_flood_softad_1}) with the monotonicity of $\phi$, we have that the expected surrogate loss of $h^{\ast} \in \Herr$ can be bounded above by
\begin{align}\label{eqn:surrogates_flood_softad_2}
\exx[\loss_{\phi}(h^{\ast})] = \exx[\indic\{h^{\ast}(\rdv{X}) = \rdv{Y}\}\loss_{\phi}(h^{\ast})] \leq \phi(0).
\end{align}
With this in place, consider any $h \notin \Herr$, namely any $h \in \HH$ such that $\err{h} > 0$. Recall that by definition of $\HH$, there is some $s \in \mathcal{S}$ such that $h(x) = \sign(s(x))$ for all $x \in \XX$. Again by assumption, this $s(\cdot)$ is measurable.\footnote{We have a typical notion of Borel measurability in mind. It is sufficient, for example, if for any $a \in \RR$, we have that the event $\{s(\rdv{X}) \leq a\}$ is measurable, i.e., an element of the sigma-field composing the underlying measure space. See for example \citet[Ch.~1]{ash2000a} for more background.} Consider a simple re-scaling of $s$, namely for some constant $b > 0$, let us define
\begin{align*}
\widetilde{s}(x) \defeq
\begin{cases}
b, & s(x) \geq 0\\
-b, & s(x) < 0.
\end{cases}
\end{align*}
Using the measurability of $s(\cdot)$ and the fact that $\widetilde{s}(\cdot)$ is a simple function (in the usual measure theoretical sense), we have that $\widetilde{s} \in \mathcal{S}$ and thus defining $\widetilde{h}(x) \defeq \sign(\widetilde{s}(x))$, we have $\widetilde{h} \in \HH$ as well. In fact, since signs are not changed, we have $h(x) = \widetilde{h}(x)$ for all $x \in \XX$, and thus $\err{\widetilde{h}} = \err{h} > 0$. On the other hand, re-scaling impacts the margin penalties incurred via the surrogate loss, and the resulting expected value is
\begin{align}
\nonumber
\exx[\loss_{\phi}(\widetilde{h})] & = \exx[\indic\{\widetilde{h}(\rdv{X}) \neq \rdv{Y}\}\loss_{\phi}(\widetilde{h})] + \exx[\indic\{\widetilde{h}(\rdv{X}) = \rdv{Y}\}\loss_{\phi}(\widetilde{h})]\\
\label{eqn:surrogates_flood_softad_3}
& = \err{h}\phi(-b) + (1-\err{h})\phi(b).
\end{align}
Using the running assumption that $\phi(\cdot)$ is both unbounded and non-negative, from (\ref{eqn:surrogates_flood_softad_3}) it is clear that we can take $b > 0$ large enough that $\exx[\loss_{\phi}(\widetilde{h})] > \phi(0)$. Returning to the threshold $\theta_{\varepsilon}$, if we set $\varepsilon = \exx[\loss_{\phi}(\widetilde{h})] - \phi(0)$, we have found a value of $\theta_{\varepsilon}$ such that
\begin{align*}
\exx[\loss_{\phi}(h^{\ast})] \leq \phi(0) < \exx[\loss_{\phi}(\widetilde{h})] = \theta_{\varepsilon},
\end{align*}
noting that the first inequality comes from (\ref{eqn:surrogates_flood_softad_2}). Note that this implies
\begin{align*}
\exx[\loss_{\phi}(\widetilde{h})] - \theta_{\varepsilon} = 0 > \exx[\loss_{\phi}(h^{\ast})] - \theta_{\varepsilon}
\end{align*}
and using the local strict convexity of $\widetilde{\rho}$ near the origin, we have
\begin{align*}
\widetilde{\rho}(\exx[\loss_{\phi}(\widetilde{h})] - \theta_{\varepsilon}) = 0 < \widetilde{\rho}(\exx[\loss_{\phi}(h^{\ast})] - \theta_{\varepsilon}).
\end{align*}
In other words, we have that $h^{\ast} \in \Herr$ is sub-optimal in terms of $\crit_{\textup{inn}}(\loss_{\phi}(\cdot);\theta_{\varepsilon})$. Note that the preceding argument (in particular the choice of $b$) does not depend on the particular choice of $h^{\ast} \in \Herr$, and thus we can conclude that all elements of $\Herr$ share this sub-optimality, which is the desired result for $\crit = \crit_{\textup{inn}}$.

As for the remaining case of $\crit = \crit_{\textup{out}}$, the preceding argument can fail because a large $b$ used in defining $\widetilde{s}$ means that the distribution of $\loss_{\phi}(\widetilde{h})$ becomes arbitrarily widely spread out, and thus can be strongly penalized by $\crit_{\textup{out}}$. As a simple alternative, we consider re-scaling and flipping the sign of any $\err{\cdot}$-optimal classifier. Take any $h^{\ast} \in \Herr$, with $h^{\ast}(x) = \sign(s^{\ast}(x))$ for some $s^{\ast} \in \mathcal{S}$. Define a modified version of the scoring function as follows:
\begin{align*}
\widetilde{s}^{\ast}(x) \defeq
\begin{cases}
\frac{-s^{\ast}(x)}{\abs{s^{\ast}(x)}}, & s^{\ast}(x) \neq 0\\
0, & s^{\ast}(x) = 0.
\end{cases}
\end{align*}
As mentioned earlier, measurability is preserved, and so $\widetilde{s}^{\ast} \in \mathcal{S}$ and thus setting $\widetilde{h}^{\ast}(x) \defeq \sign(\widetilde{s}^{\ast}(x))$, we also have $\widetilde{h}^{\ast} \in \HH$. Since this classifier always disagrees with $h^{\ast}$, we have $1 = \err{\widetilde{h}^{\ast}} > \err{h^{\ast}} = 0$. With probability 1, we have
\begin{align*}
0 \leq \loss_{\phi}(h^{\ast}) \leq \phi(0) < \loss_{\phi}(\widetilde{h}^{\ast}) = \phi(-1).
\end{align*}
As such, by taking $\theta = \phi(-1)$, clearly we have that
\begin{align*}
\exx[\widetilde{\rho}(\loss_{\phi}(\widetilde{h}^{\ast})-\theta)] = 0 < \exx[\widetilde{\rho}(\loss_{\phi}(h^{\ast})-\theta)],
\end{align*}
where the inequality holds because $\loss_{\phi}(h^{\ast})-\phi(-1) < 0$ holds with probability 1, plus the strict convexity of $\widetilde{\rho}$ around the origin. This shows how while $h^{\ast}$ is optimal in $\err{\cdot}$, it is sub-optimal in terms of $\crit_{\textup{out}}(\loss_{\phi}(\cdot);\theta)$. Again, this approach holds for any choice of $h^{\ast} \in \Herr$, and thus the desired result holds for $\crit = \crit_{\textup{inn}}$ as well.
\end{proof}

\subsection{Divergence of CVaR from error probability}\label{sec:cvar_01_diverge}

As mentioned in Remark \ref{rmk:oce_risks}, CVaR satisfies the non-decreasing condition of Proposition \ref{prop:collapse_broad_ocelike}, but $\rho(u) = (u)_{+}/(1-\beta)$ is not strictly monotonic on $\RR$ (i.e., not increasing). With $\rho$ that is non-decreasing, we know the inclusion (\ref{eqn:oce_like_inclusion}) holds, telling us that all error probability minimizers are also optimal in terms of CVaR. Are there any CVaR-optimal solutions that are \emph{not} optimal in terms of error probability? The answer is that it depends on both $\HH$ and $\beta$. To consider this a bit more precisely, first note that quantiles (recall \S{\ref{sec:collapse_broad_quantiles}}) of the zero-one loss are also binary valued, namely that we have
\begin{align*}
\mathrm{Q}_{\beta}(\loss_{\textup{01}}(h)) =
\begin{cases}
1, & \text{ if } 1 > \beta > \prr{\{\loss_{\textup{01}}(h) = 0\}}\\
0, & \text{ if } 0 < \beta \leq \prr{\{\loss_{\textup{01}}(h) = 0\}}.
\end{cases}
\end{align*}
It is straightforward to check that the following implications hold:
\begin{align}
\label{eqn:cvar_model_too_poor}
\min_{h \in \HH} \err{h} > 1-\beta & \implies \mathrm{Q}_{\beta}(\loss_{\textup{01}}(h)) = 1, \text{ for all } h \in \HH\\
\label{eqn:cvar_model_too_good}
\max_{h \in \HH} \err{h} \leq 1-\beta & \implies \mathrm{Q}_{\beta}(\loss_{\textup{01}}(h)) = 0, \text{ for all } h \in \HH
\end{align}
When the model $\HH$ is poor enough (or $\beta$ is large/strict enough) that the condition in (\ref{eqn:cvar_model_too_poor}) holds, the CVaR risk is constant, i.e., $\underline{\crit}_{\rho}(\loss_{\textup{01}}(h)) = 1$ for all $h \in \HH$, and thus trivially all $h \in \HH$ are optimal in CVaR, but certainly need not be in $\err{\cdot}$. When the model is good enough (or $\beta$ is small/loose enough) that the condition in (\ref{eqn:cvar_model_too_good}) holds, we have for all $h \in \HH$ that
\begin{align*}
\rho(1-\mathrm{Q}_{\beta}(\loss_{\textup{01}}(h)))-\rho(-\mathrm{Q}_{\beta}(\loss_{\textup{01}}(h))) = \rho(1) > 0,
\end{align*}
and thus the strong monotonicity argument used earlier applies, telling us that CVaR-optimal solutions and error probability minimizers are identical (i.e., (\ref{eqn:oce_like_coincide}) holds).

\subsection{Collapse under variantile}\label{sec:variantile_collapse}

With Remark \ref{rmk:variantile} as context, note that for $0 \leq \theta \leq 1$, it is straightforward to confirm that under the zero-one loss $\loss = \loss_{\textup{01}}$, for any $h \in \HH$ we have
\begin{align*}
2\exx\left[ \abs{\mathbf{1}_{\theta}(\loss_{\textup{01}}(h))-\tau}(\loss_{\textup{01}}(h)-\theta)^{2} \right] = 2\left[ (1-\tau)(1-\err{h})\theta^{2} + \tau \err{h} (1-\theta)^{2} \right].
\end{align*}
Checking first-order conditions, this function can be minimized (in $\theta$) by setting
\begin{align*}
\theta = \frac{\tau \err{h}}{(1-\tau)(1-\err{h}) + \tau \err{h}}.
\end{align*}
Plugging this solution in to obtain an explicit form of $\underline{\crit}_{\tau}(\loss_{\textup{01}}(h))$, we have
\begin{align*}
& \underline{\crit}_{\tau}(\loss_{\textup{01}}(h)) =\\
& 2\left[ (1-\tau)(1-\err{h})\left(\frac{\tau \err{h}}{(1-\tau)(1-\err{h}) + \tau \err{h}}\right)^{2} + \tau \err{h} \left(\frac{(1-\tau)(1-\err{h})}{(1-\tau)(1-\err{h}) + \tau \err{h}}\right)^{2} \right].
\end{align*}
With this form in mind, simply replace $\err{h}$ with a variable $p$ that can be taken over $[0,1]$, and introduce a helper function
\begin{align}\label{eqn:nonmonotonic_variantile_helper}
g_{\tau}(p) \defeq 2\left[ (1-\tau)(1-p)\left(\frac{\tau p}{(1-\tau)(1-p) + \tau p}\right)^{2} + \tau p \left(\frac{(1-\tau)(1-p)}{(1-\tau)(1-p) + \tau p}\right)^{2} \right].
\end{align}
One can readily check that this function $g_{\tau}(\cdot)$ is strictly concave on the unit interval for any choice of $0 < \tau < 1$; see Figure \ref{fig:tests_nonmonotonic} for a numerical example. As a result, we may conclude that depending on the nature of $\HH$, the minimizer set of $\underline{\crit}_{\tau}(\loss_{\textup{01}}(\cdot))$ is either the set of minimizers \emph{or} maximizers of $\err{\cdot}$, a result analogous to the point raised in \S{\ref{sec:collapse_broad_fixed}}.

\begin{figure}[t]
\centering
\includegraphics[width=0.5\textwidth]{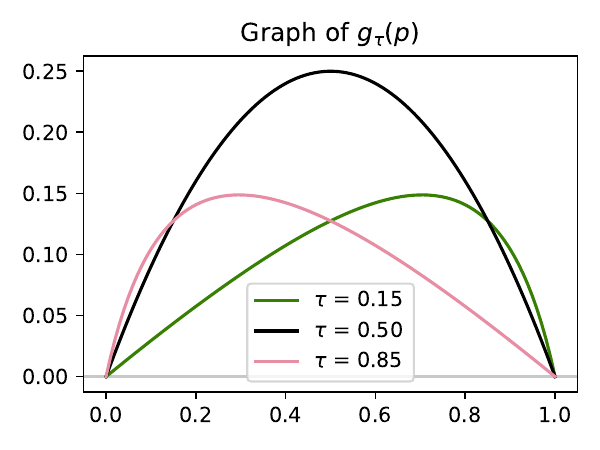}
\caption{Graphs of $g_{\tau}(\cdot)$ in (\ref{eqn:nonmonotonic_variantile_helper}) over the unit interval for varying choices of $\tau$.}
\label{fig:tests_nonmonotonic}
\end{figure}

\clearpage
\pagebreak

\section{Additional Figures}

As supplementary material for the main text, here we include two additional figures.
\begin{itemize}
\item Examples of valid $\rho$ for OCE criteria and $\widetilde{\rho}$ for loss-restraining criteria are shown in Figure \ref{fig:demo_valid_rho_rhotilde}.
\item In \S{\ref{sec:empirical}} we only gave metric trajectories for CIFAR-100 and SVHN; analogous results for CIFAR-10 and FashionMNIST are given in Figure \ref{fig:benchmarks_2}.
\end{itemize}

\begin{figure}[h]
\centering
\includegraphics[width=0.5\textwidth]{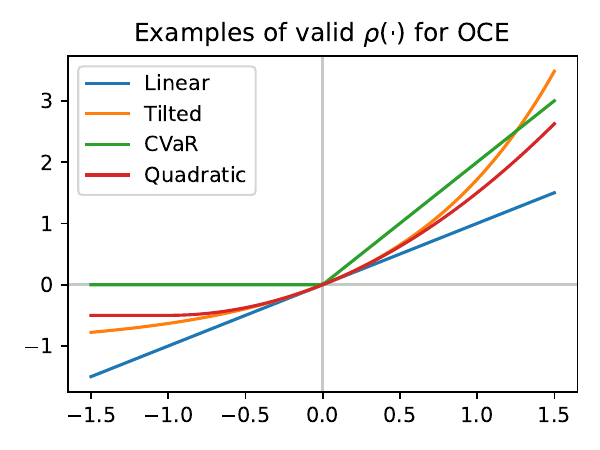}\includegraphics[width=0.5\textwidth]{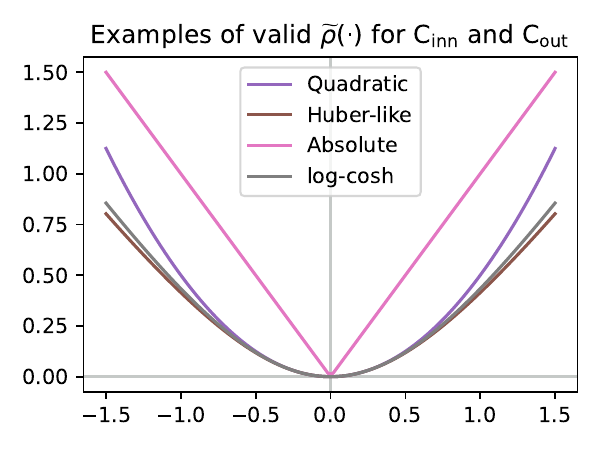}
\caption{Examples of valid choices of $\rho$ (left) and $\widetilde{\rho}$ (right) for use in defining OCE criteria (\ref{eqn:defn_OCE}) and loss-restraining criteria (\ref{eqn:defn_Cinner})--(\ref{eqn:defn_Couter}) respectively.}
\label{fig:demo_valid_rho_rhotilde}
\end{figure}

\begin{figure}[h]
\centering
\includegraphics[width=0.5\textwidth]{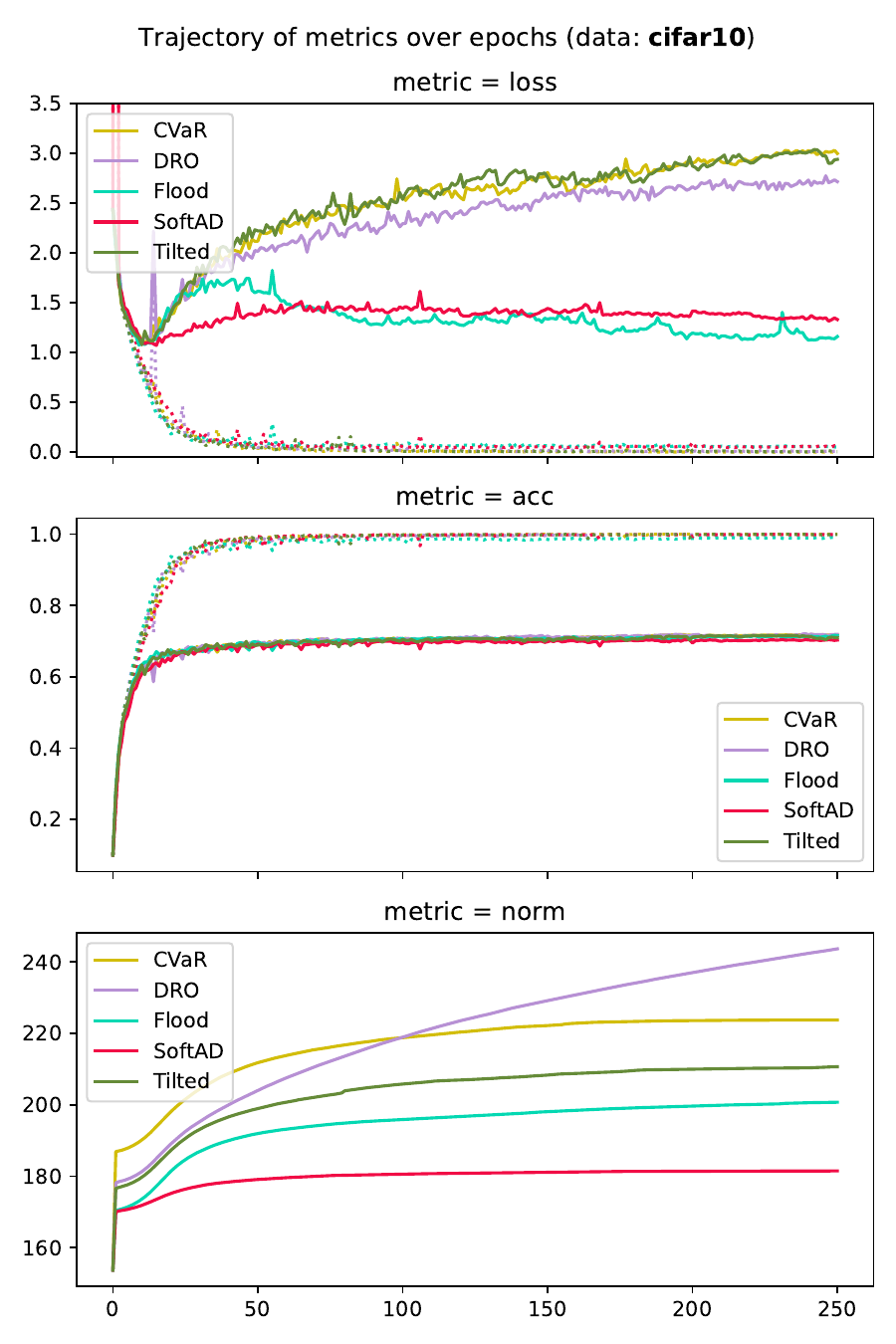}\includegraphics[width=0.5\textwidth]{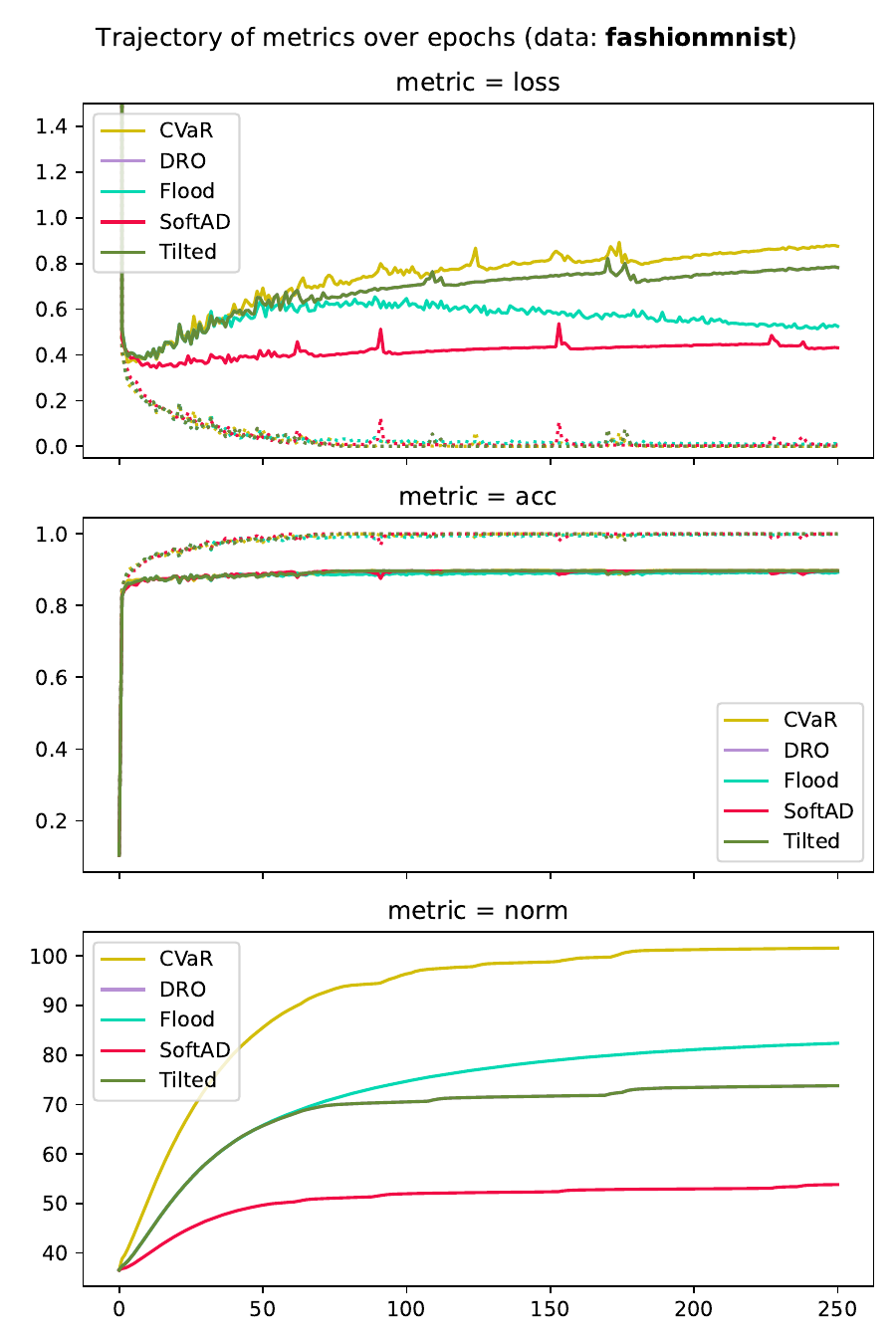}
\caption{Results for CIFAR-10 and FashionMNIST; see the caption of Figure \ref{fig:benchmarks_1} for details.}
\label{fig:benchmarks_2}
\end{figure}

\clearpage

\bibliography{../refs/refs}

\end{document}